\newtheorem{definition}{Definition}
\newtheorem{assumption}{Assumption}
\newtheorem{theorem}{Theorem}
\newtheorem{property}{Property}
\newcommand{\CI}{\mathrel{\perp\mspace{-10mu}\perp}}
\newcommand{\nCI}{\centernot{\CI}}
\newcommand{\mie}{\operatorname{MIE}}
\newcommand{\amie}{\operatorname{AMIE}}
\title{Linking Model Intervention to Causal Interpretation in Model Explanation}
\author{Debo Cheng~\textsuperscript{1}\footnotemark[1]~, Ziqi Xu~\textsuperscript{2}\footnotemark[1]~, Jiuyong Li~\textsuperscript{1}, Lin Liu~\textsuperscript{1}, Kui Yu~\textsuperscript{3}, Thuc Duy Le~\textsuperscript{1} \& Jixue Liu~\textsuperscript{1}  \\
	University of South Australia~\textsuperscript{1}~~~~~
	RMIT University~\textsuperscript{2}~~~~~
	Hefei University of Technology~\textsuperscript{3}\\
	\texttt{\{firstname.lastname\}@unisa.edu.au}~\textsuperscript{1} \\
	\texttt{ziqi.xu@rmit.edu.au}~\textsuperscript{2}~~~~~
	\texttt{yukui@hfut.edu.cn}~\textsuperscript{3} \\
}
\begin{document}

\maketitle
\renewcommand{\thefootnote}{\fnsymbol{footnote}} 
\footnotetext[1]{These authors contributed equally to this work.}

\begin{abstract}
Intervention intuition is often used in model explanation where the intervention effect of a feature on the outcome is quantified by the difference of a model prediction when the feature value is changed from the current value to the baseline value. Such a model intervention effect of a feature is inherently association. In this paper, we will study the conditions when an intuitive model intervention effect has a causal interpretation, i.e., when it indicates whether a feature is a direct cause of the outcome. This work links the model intervention effect to the causal interpretation of a model. Such an interpretation capability is important since it indicates whether a machine learning model is trustworthy to domain experts. The conditions also reveal the limitations of using a model intervention effect for causal interpretation in an environment with unobserved features. Experiments on semi-synthetic datasets have been conducted to validate theorems and show the potential for using the model intervention effect for model interpretation.   


\end{abstract}
	
\section{Introduction}
\label{sec:intro}
Artificial intelligence (AI) models are increasingly used to support decision-making in critical areas such as medicine, law and economics~\cite{bishop2006pattern,jordan2015machine}. As a result, beyond model accuracy, it is essential for decision-makers to understand how a model makes its predictions in order to gain trust in the model~\cite{MurdochPNASReview2021,MILLER20191,Review-InterpretableML2020}. 
 	
A major goal of model explanation is to understand the influence of a feature on the predictions of a model. Intervention intuition is often used for such attribution. When a feature is considered as a cause of the outcome, the difference of the outcome between the presence of the cause and the absence of the cause is used for the contribution of the feature to the model prediction. We call such an attribution the model intervention effect since it indicates the effect of a feature on the outcome when the feature is set to 1 (presence) and 0 (absence) respectively when all other features are kept constant.
Many model explanation methods use model intervention. For example, the prediction difference analysis understands image classification by using an image with a feature excluded (setting its value to the baseline) to quantify the feature's contribution to a prediction~\cite{ZintgrafCAW17}. A similar approach is used for understanding decisions with respect to a representation in NLP tasks by erasing the representation (setting its value to the baseline value) for the feature contribution estimation~\cite{DLiMJ16a}. Causal Concept Effect (CaCE) understands an image classifier by comparing the predictions with and without a concept (which can be considered as a super-pixel or super-feature representing a visual concept)~\cite{goyal2019explaining}. Textual Representation-based Average Treatment Effect (TReATE) extends (CaCE) to estimate the concept effect for a text classification model in terms of concepts~\cite{Feder-CausaLM-2021}. 

 

The model intervention effect is inherently association-based and does not indicate causality since they are estimated using data and a machine learning model. Causality is rooted in real-world phenomena. For example, in an experiment, if a change in a variable leads to a change in the outcome when all other variables are fixed constant in the experiment, the variable is a cause of the outcome. 
In data, it is easy to find the difference in the predicted outcomes when a feature $X$ is assigned to different values with all other features given constant. Such a difference may not reveal that the change in the outcome in the real-world is caused by the feature. In the famous example, when changing the ice cream sales (i.e., the feature $X$) from low to high in data, we can obtain the number of drownings (i.e., the outcome) increase. However, the model intervention effect of ice cream sales on drowns is non-zero, but in reality, ice cream sales are not the cause of drownings. Hence, a model intervention effect obtained in data does not, in general, indicate causality in real life.

It is desirable to link the model intervention effect to causality. If the model intervention effect indicates the direct causal effect of a feature on the outcome in real life, then through the feature attribution by the model intervention effect, domain experts will know whether a model is trustworthy by the consistency between known direct causes and direct causes perceived by the model. Consistency means that the underlying logic used for classification is sound in the view of domain knowledge and adds evidence for the validity of a model.    

    

We distinguish our work from other causal interpretation works at the top level here, and refer readers to Section~\ref{sec_summary} and Appendix~\ref{sec:rel} for details. 
Causality-based XAI methods are broadly classified into two types: Causal effect estimation based~\cite{goyal2019explaining,Feder-CausaLM-2021,pmlr-v108-janzing20a,pmlr-v162-jung22a} and counterfactual causal interpretation~\cite{chou2022counterfactuals,guidotti2022counterfactual}. Our work is related to the first type. Most works of the first type use strong assumptions, such as known causal graphs (or causal ordering of variables), and such strong assumptions limit their applications. We have relaxed the assumption and only assume that the outcome is not a cause of any other variables. No work except~\cite{pmlr-v162-jung22a} considers latent variables. However, the work in~\cite{pmlr-v162-jung22a} deals with limited types of unobserved variables for model explanation (see Section~\ref{sec_summary} for details). Importantly, the objective of our paper is not to propose another measure/method for model explanation, but to establish a link between the commonly used model intervention effect to causality. This work makes the following contributions.

\begin{enumerate}
\item The work links the model intervention effect to causality and identifies the conditions when the causal interpretation of the model intervention effect is warranted. The theoretical results assist users in using the model intervention effect for causal explanations and also reveal the risks in such explanations when there exist unobserved variables. 


\item Extensive experiments are conducted to validate the correctness of the theorems and demonstrate the usefulness of the conditions (or linkage) identified in the model explanation. 



\end{enumerate}
		
\section{Preliminaries and problem definition}
\label{subsec:interpretationModel}
In this paper, uppercase letters denote variables, while their corresponding values are represented by lowercase letters. Boldfaced uppercase and lowercase letters are used to represent sets of variables and their values, respectively. 

With graphical causal modelling~\cite{spirtes2000causation,pearl2009causality,hernan2010causal}, a causal directed acyclic graph (DAG) is commonly used. A DAG, represented as $\mathcal{G}=(\mathbf{V}, \mathbf{E})$, consists of a set of nodes $\mathbf{V}$ (representing variables) and a set of directed edges $\mathbf{E}$ and has no directed cycles. Within $\mathcal{G}$, a path is a sequence of connected edges between two nodes. $\mathcal{G}$ is known to be a \emph{causal DAG} if a directed edge indicates a causal relationship. For example, $X_i\rightarrow X_j$ represents that $X_i$ is a direct cause (or parent) of $X_j$, and $X_j$ is a direct effect (or child) of $X_i$. In a DAG, a collider node $X_i$ is a node that has multiple edges pointing towards it. More definitions regarding graphical causal modelling such as Markov condition, causal sufficiency and $d$-separation can be found in Appendix~\ref{Appendix:A}. 



In the context of causal inference, an intervention refers to the act of manipulating one variable to observe its effect on another variable~\cite{pearl2009causality} in an experiment. and it can be represented by the ``do'' operator~\cite{pearl2018book}, denoted as $do(X = x)$, where $X$ is the variable that is intervened, and $x$ is the value to which $X$ is set by the intervention. With some strict conditions, the effect of an intervention can be estimated in observational data with do-calculus with a causal graph~\cite{pearl2009causality}. To differentiate it from our model intervention, we refer to the intervention in causal inference as $do$-intervention.     

In our problem setting, $Y=f(\mathbf{X})$ where $\mathbf{X}$ is a set of features, represented as $\mathbf{X} = \{X_1, X_2, \dots, X_m\}$, is the set of features, 
$Y$ is the predicted outcome, and $\mathbf{x} = (x_1, x_2, \ldots, x_m)$ denotes an individual sample of $\mathbf{X}$. 
%
In the following discussion,  we assume $X_i\in \{X_1, X_2, \dots, X_m\}$ and $Y$ are binary. However, the conclusions in this paper apply to situations involving continuous outcomes, and when $Y$ is continuous, $E(y | \mathbf{X=x})$ replaces the probability $P(y | \mathbf{X=x})$ found in the binary formulae. We use $y$ as the shorthand of $Y=1$ and $x_i$ to represent $X_i=x_i$, respectively, wherever the context allows for clear interpretation.

	

We use an intervention intuition to estimate the importance of a feature to model prediction, i.e., the feature value is set to different levels in a model to explore its effect on the model prediction. 

\begin{definition}[Model Intervention Effects (MIE) and the average MIE (AMIE)]
\label{def:amie}
	Given a model $Y = f(\mathbf{X})$ and an instance $\mathbf{x}$, the Model Intervention Effect of feature $X_i$ on the outcome predicted $Y$ with respect to $\mathbf{x}$ is defined as $\mie(X_i, Y | \mathbf{X'=x'}) = P(y | X_i=1, \mathbf{X'=x'}) - P(y | X_i=0, \mathbf{X'=x'})$ where the probabilities are estimated by the model $f(\mathbf{X})$ and $\{X_i \cup \mathbf{X'} = \mathbf{X}\}$. For all instances $\mathbf{x}$ in a dataset that is sampled from the same population as the dataset for training the model $Y = f(\mathbf{X})$, the Average Model Intervention Effect is $\amie(X_i, Y) = \mathbb{E}_{\mathbf{x}} (\mie(X_i, Y | \mathbf{X'=x'}))$.
\end{definition}

MIE quantifies a change of $Y$ when $X_i$ is changed from its baseline ($X_i = 0$) to ($X_i = 1$) while the values of all other features keep what they are. AMIE is the expectation of the change over a dataset which can be the training dataset or a test dataset.  

In general, AMIE does not represent causality. AMIE is an estimate in data (and model) without considering a causal graph and is not a $do$-intervention in causal inference. It is desirable that a feature importance is linked to causal interpretation, i.e., a non-zero AMIE represents a change of $Y$ due to a change of $X_i$ in the real-world system. Therefore, users can evaluate the trustworthiness of a model using feature attribution.  

We will study the conditions when AMIE estimated in the model (and data) represents causality in the physical system. 
 
	
\section{Linking AMIE with causality}
\label{sec:MIE}
In this section, we establish the link between AMIE and the causal mechanism of the underlying system. All proofs are included in Appendix~\ref{sec:MIE_app} due to page limitations. 
	
\subsection{When there are no unobserved variables}
We begin with an ideal case where all variables in the system are observed.  We aim to link an AMIE in a machine learning model (and data) to a real-world phenomenon. To make such a link, we will need a few assumptions about the data and model.   
\begin{assumption}[The faithfulness assumption]
	\label{assump-Data-graph-faithfulness}
	The data is generated from a causal DAG $\mathcal{G}$, and both the data and $\mathcal{G}$ of $\{\mathbf{X} \cup Y\}$ are faithfulness to each other. 
\end{assumption}
\begin{assumption}[The computational compatibility assumption]
	\label{assump-Data-model-faithfulness}
	The predictive model $Y=f(\mathbf{X})$ is built on the data, and the model $f(\mathbf{X})$ represents the conditional probabilities $P(Y|\mathbf{X})$ in data precisely. 
\end{assumption}

The above assumptions present two levels of faithfulness. Firstly, the causal mechanism, i.e., the causal DAG, and data are faithful to each other with all conditional dependencies in the data and the causal mechanism with each other, i.e., every conditional independence relation in the data existing in the causal mechanism, and vice versa. Secondly, the machine learning model is consistent with the data. All conditional dependencies $P(Y | \mathbf{X})$ in the data and the machine learning model are consistent, i.e., the conditional probabilities of $P(Y| \mathbf{X})$ can be derived from the model precisely. So the model contains information on the causal mechanism and feature attribution of a model is possible to be linked to the mechanism. 

\begin{theorem}[Linking AMIEs and direct causes of $Y$ without the presence of unobserved variables]
	\label{theorem_DirectCause-AMIEs01}
	Suppose that Assumptions~\ref{assump-Data-graph-faithfulness} and~\ref{assump-Data-model-faithfulness} hold, and there are no unobserved variables. If $\mathbf{X}$ includes no descendant variables of $Y$, then $\forall X_i\in\mathbf{X}$, $\amie(X_i, Y) \ne 0$ if and only if $X_i$ is a direct cause of $Y$. Equivalently, $\amie(X_i, Y) = 0$ if and only if $X_i$ is not a direct cause of $Y$.
\end{theorem}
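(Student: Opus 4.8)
The plan is to reduce the probabilistic quantity $\amie$ to a conditional (in)dependence statement and then read that statement off $\mathcal{G}$ via $d$-separation and faithfulness. First I would invoke Assumption~\ref{assump-Data-model-faithfulness} to replace the model probabilities in $\mie$ by the true data conditionals, giving $\mie(X_i, Y \mid \mathbf{X'=x'}) = P(y \mid X_i=1, \mathbf{x'}) - P(y \mid X_i=0, \mathbf{x'})$ and hence $\amie(X_i, Y) = \mathbb{E}_{\mathbf{x'}}\big[P(y \mid X_i=1, \mathbf{x'}) - P(y \mid X_i=0, \mathbf{x'})\big]$, an expression living entirely in the data distribution. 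The immediate payoff is the easy implication: if $X_i \CI Y \mid \mathbf{X'}$ in the distribution, then $P(y \mid X_i=1, \mathbf{x'}) = P(y \mid X_i=0, \mathbf{x'})$ for every $\mathbf{x'}$, so each summand, and therefore $\amie(X_i, Y)$, vanishes.

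Next I would establish the structural fact that $Y$ is a \emph{sink} of $\mathcal{G}$. Since there are no unobserved variables and $\mathbf{X}$ contains no descendant of $Y$, any child of $Y$ would have to lie in $\mathbf{X}$ and be a descendant of $Y$, a contradiction; hence every edge incident to $Y$ points into $Y$. I would then show that $\mathbf{X'}$ $d$-separates $X_i$ from $Y$ whenever $X_i$ is not a parent of $Y$. Take any path $\pi$ from $X_i$ to $Y$; because $Y$ is a sink its last edge is $W \to Y$ for some parent $W$ of $Y$, and since $X_i$ is not a parent we have $W \neq X_i$, so $W \in \mathbf{X'}$. On $\pi$ the edge $W \to Y$ leaves $W$, so $W$ cannot be a collider (that would require both incident edges to point into $W$); being a conditioned non-collider, $W$ blocks $\pi$. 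As $\pi$ was arbitrary, $X_i \perp_d Y \mid \mathbf{X'}$.

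For the ``not a direct cause'' direction, observe that since $X_i$ is not a descendant of $Y$ (by hypothesis) the only possible adjacency to $Y$ would be $X_i \to Y$; if $X_i$ is not a direct cause this edge is absent, so the $d$-separation above applies. Faithfulness (Assumption~\ref{assump-Data-graph-faithfulness}) then upgrades $d$-separation to $X_i \CI Y \mid \mathbf{X'}$, and the first paragraph gives $\amie(X_i, Y) = 0$. Conversely, if $X_i \to Y$ is present, this single edge is a path with no intermediate node and no collider, hence cannot be blocked by any conditioning set; thus $X_i$ and $Y$ are $d$-connected given $\mathbf{X'}$, and faithfulness yields $X_i \nCI Y \mid \mathbf{X'}$, i.e. $P(y \mid X_i=1, \mathbf{x'}) \neq P(y \mid X_i=0, \mathbf{x'})$ for at least one $\mathbf{x'}$.

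The main obstacle is precisely this last step: conditional dependence only forces $\mie$ to be nonzero for \emph{some} $\mathbf{x'}$, whereas $\amie$ is a $P(\mathbf{x'})$-weighted average that could in principle cancel to zero even when the individual terms do not (for instance when $X_i$ interacts with other parents of $Y$ so that the differences change sign). I would close this gap with the same genericity reasoning that underlies faithfulness: the vanishing of $\mathbb{E}_{\mathbf{x'}}\big[P(y \mid X_i=1, \mathbf{x'}) - P(y \mid X_i=0, \mathbf{x'})\big]$ is a single exact algebraic relation among the conditional-probability parameters of $\mathcal{G}$, so the faithful distributions satisfying it form a Lebesgue-measure-zero set. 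Reading Assumption~\ref{assump-Data-graph-faithfulness} in this generic sense, a direct cause forces $\amie(X_i, Y) \neq 0$, which completes the equivalence.
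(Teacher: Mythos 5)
Your proposal is correct and follows essentially the same route as the paper's proof: reduce $\amie$ to a conditional (in)dependence statement between $X_i$ and $Y$ given $\mathbf{X}'$, and read that statement off the DAG via $d$-separation and faithfulness, using the no-descendants condition to force any adjacency to be $X_i \to Y$. Where you differ is in rigor rather than strategy, and on one point the difference matters. The paper's forward direction simply asserts that if $X_i$ is a direct cause then ``$P(Y\mid X_i=1,\mathbf{X}') \ne P(Y\mid X_i=0,\mathbf{X}')$, i.e., $\amie(X_i,Y)\ne 0$,'' silently passing from conditional dependence (a nonzero difference for \emph{some} $\mathbf{x}'$) to a nonzero \emph{average} of signed differences over $\mathbf{x}'$. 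You are right that these are not the same thing --- with interactions among the parents of $Y$ the pointwise differences can change sign and cancel in expectation --- and your measure-zero/genericity argument is the ingredient needed to close that gap (the average is a non-identically-zero polynomial in the CPT parameters, so it vanishes only on a null set). The only caveat is that this reads Assumption~\ref{assump-Data-graph-faithfulness} in a stronger ``generic parameters'' sense than standard faithfulness, which only excludes non-entailed conditional independencies; you flag this explicitly, whereas the paper does not acknowledge the issue at all. Your explicit derivation that $Y$ is a sink and that the last non-collider $W$ on any path blocks it is likewise a correct expansion of a step the paper leaves implicit.
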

	
Theorem~\ref{theorem_DirectCause-AMIEs01} shows the conditions when the AMIE of a feature provides a causal attribution of the feature when the conditions in Theorem~\ref{theorem_DirectCause-AMIEs01} are met. The exclusion of descendant variables of $Y$ from $\mathbf{X}$ means that all post-outcome variables are excluded from $\mathbf{X}$. Domain experts know the distinction between pre-outcome variables and post-outcome variables very well. This requirement is much weaker than that made by existing causal interpretation methods which assume the knowledge of a complete DAG or the causal ordering of all pairs of variables. Based on our assumption, we only need to know $m$ causally ordered pairs of variables between each variable in $\mathbf{X}$ and $Y$ where $m$ is the number of features in $\mathbf{X}$. On the contrary, knowing a complete DAG means we need to know $m^2 /2$ causally ordered pairs of variables. Furthermore, knowing the causally ordered pairs between a variable in $\mathbf{X}$ and $Y$ is easier than knowing the causally ordered pairs of variables in $\mathbf{X}$. For example, let $\mathbf{X}$ represent genes and $Y$ a protein product. Biologists would know gene activities cause protein productivity based on general biology knowledge. However, knowing the causal relationships among genes is much harder. Many causal relationships among genes are unknown.

Let us consider the implications of the theorem in two scenarios.

In the first scenario, there is a real underlying mechanism, such as a gene expression process where $\mathbf{X}$ represent gene expression and $Y$ a gene product, e.g., a protein due to the gene expression, and a machine learning model $Y=f(\mathbf{X})$ is expected to model the gene-protein relationships. Assume two black box models have been built to predict protein production, and they have the same test accuracy, but different sets of features with non-zero AMIEs (called perceived direct causes). Domain experts can inspect the perceived direct causes and trust the one that is more consistent with their domain understanding since the model is more faithful to the underlying mechanism.        

In the second scenario, a machine learning model $Y=f(\mathbf{X})$ is expected to model the classification or labelling mechanism, rather than the working of the underlying mechanism of the real system or process. For example, in the context of image classification, the inputs (features) are assumed as the causes of the output (class label). In this case, no $Y$'s descendants are included in $\mathbf{X}$. Assume that there is a classification (labelling) mechanism that determines how instances are labelled by feature values. We also assume that the data with true class labels are generated from the classification mechanism and that the machine learning model $Y = f(X)$ has learned the conditional probabilities $P(Y | \mathbf{X})$ accurately. So the features with non-zero AMIEs represent direct causes of the class label in the labelling mechanism. The trustworthiness of the classifier can be judged by the perceived direct causes using AMIEs. For example, for an image classifier, if the perceived direct causes contain mostly only style or background information of an image then the classifier is not trustworthy. 


Causal interpretation is possible when a closed world assumption is taken. In the closed world assumption, that data is all we have and hence there will be no unobserved direct causes. Also, $Y$ is the functional outcome of $\mathbf{X}$ and hence no descendant variables of $Y$ are included in $\mathbf{X}$. Therefore, AMIE reveals the underlying mechanism {direct causes of $Y$} used by a machine learning model. 

\subsection{When there are unobserved variables}
\begin{wrapfigure}{r}{0.55\textwidth}
	\vspace{-1cm}
	\centering
	\begin{subfigure}[c]{0.16\textwidth}
		\centering
		\includegraphics[scale=0.3]{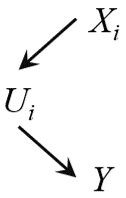}
		\caption{$X_i$ is an acting cause of the unobserved direct cause $U_i$ of $Y$.}
		\label{pic:intro1}
	\end{subfigure}
	\hspace{0.01\textwidth}
	\begin{subfigure}[c]{0.157\textwidth}
		\centering
		\includegraphics[scale=0.3]{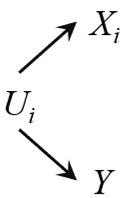}
		\caption{$X_i$ is a proxy of the unobserved direct cause $U_i$ of $Y$.}
		\label{pic:intro2}
	\end{subfigure}
	\hspace{0.01\textwidth}
	\begin{subfigure}[c]{0.13\textwidth}
		\centering
		\includegraphics[scale=0.3]{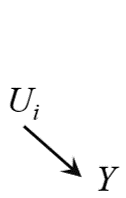}
		\caption{$U_i$ is an independent latent direct cause of $Y$.}
		\label{pic:intro3}
	\end{subfigure}
	\caption{The DAG representing the data generation mechanism assumed in this paper.}
	\label{pic:intro}
	\vspace{-0.5cm}
\end{wrapfigure}

Unobserved variables prevail in real-world applications, and they may be unknown or unmeasured. These unobserved variables introduce additional complexity in linking AMIEs to causality.

There are three types of unobserved direct causes of $Y$. We do not consider indirect causes of $Y$ since they do not influence predicting $Y$ given the direct causes. Please refer Section~\ref{sec_summary} for more discussions. 

\begin{enumerate}[leftmargin=0.5cm]
	\item {\bf Unobserved direct cause with an observed parent}:  $U_i$ in Figure~\ref{pic:intro1} is an unobserved direct cause of $Y$ and its parent $X_i$ is observed. We call the observed variable $X_i$ an \emph{activator of the unobserved direct cause} since the effect of $U_i$ on $Y$ is activated or executed when $X_i$ is intervened.  
	\item {\bf Unobserved direct cause with an observed proxy}: As shown in Figure~\ref{pic:intro2},  $U_i$, the unobserved direct cause of $Y$ has a direct effect $X_i\in \mathbf{X}$ that is observed. In such a case, we call $X_i$ a \emph{proxy of the unobserved direct cause}.  
	\item {\bf A standalone unobserved direct cause}: As shown in Figure~\ref{pic:intro3},  $U_i$ is an unobserved cause of $Y$ that does not have an observed parent or child which is in $\mathbf{X}$.
\end{enumerate}
	

When there are standalone unobserved direct causes, a causal interpretation by AMIE is impossible since a model does not include the complete information of the causal mechanism. Standalone unobserved direct causes significantly influence the predictive power of a model too. Therefore, we do not consider standalone unobserved direct causes.

In the context of model explanation, both activators and proxies of unobserved direct causes of $Y$ are considered suitable substitutes for their respective unobserved direct causes of $Y$, and we have the following theorem to link AMIE with causality in the presence of unobserved direct causes of $Y$ with activators or proxies.
\begin{theorem}[Linking AMIE with direct causes in the presence of unobserved variables]
	\label{the:twotypes}
	Suppose that Assumptions~\ref{assump-Data-graph-faithfulness} and~\ref{assump-Data-model-faithfulness} hold and there are no standalone unobserved direct causes of $Y$. If $\mathbf{X}$ does not include any descendants of $Y$, then $\amie(X_i, Y) \ne 0$ for a variable $X_i\in \mathbf{X}$ that is an observed direct cause, an activator or a proxy of an unobserved direct cause of $Y$.
\end{theorem}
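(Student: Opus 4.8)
The plan is to reduce the statement to a conditional-dependence claim in the observed distribution and then reuse the dependence-to-AMIE implication already contained in the proof of Theorem~\ref{theorem_DirectCause-AMIEs01}. Write $\mathbf{X'}=\mathbf{X}\setminus\{X_i\}$. Since $\mie(X_i,Y\mid \mathbf{X'}=\mathbf{x'})$ and hence $\amie(X_i,Y)$ are computed entirely from $P(Y\mid\mathbf{X})$, which by Assumption~\ref{assump-Data-model-faithfulness} the model reproduces exactly, it suffices to show in each of the three cases that $X_i\nCI Y\mid \mathbf{X'}$ in the observed joint law, and that this dependence forces $\amie(X_i,Y)\neq 0$. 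Throughout I read Assumption~\ref{assump-Data-graph-faithfulness} as faithfulness of the full causal DAG $\mathcal{G}$ over $\{\mathbf{X}\cup Y\cup \mathbf{U}\}$, where $\mathbf{U}$ collects the unobserved direct causes.

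First I would exhibit an explicit active ($d$-connecting) path between $X_i$ and $Y$ given $\mathbf{X'}$ in $\mathcal{G}$. If $X_i$ is an observed direct cause, the edge $X_i\rightarrow Y$ is an active path that cannot be blocked by any conditioning set, as it has no intermediate node. If $X_i$ is an activator (Figure~\ref{pic:intro1}), the chain $X_i\rightarrow U_i\rightarrow Y$ is active because its only intermediate node $U_i$ is unobserved and therefore absent from $\mathbf{X'}$. If $X_i$ is a proxy (Figure~\ref{pic:intro2}), the fork $X_i\leftarrow U_i\rightarrow Y$ is active for the same reason. A single active path already yields $d$-connection, so faithfulness gives $X_i\nCI Y\mid \mathbf{X'}$ in the full distribution; and because this statement only constrains the joint law of $(X_i,Y,\mathbf{X'})$, it holds verbatim after marginalising out $\mathbf{U}$, i.e. in the observed distribution the model sees. (The hypothesis that $\mathbf{X}$ excludes descendants of $Y$ is not needed to keep these paths active, but it prevents conditioning on $\mathbf{X'}$ from opening post-outcome collider paths, matching the setting of Theorem~\ref{theorem_DirectCause-AMIEs01}.)

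It then remains to pass from $X_i\nCI Y\mid \mathbf{X'}$ to $\amie(X_i,Y)\neq 0$, which is exactly the implication used in Theorem~\ref{theorem_DirectCause-AMIEs01}. Since $X_i$ is binary, the dependence means $P(y\mid X_i=1,\mathbf{x'})\neq P(y\mid X_i=0,\mathbf{x'})$, that is $\mie(X_i,Y\mid \mathbf{X'}=\mathbf{x'})\neq 0$, on a set of $\mathbf{x'}$ of positive probability, so that $\amie$ is the expectation of a MIE that does not vanish identically. The main obstacle lies precisely here: a priori the positive and negative MIE contributions could cancel and make the expectation vanish despite the pointwise dependence. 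I would rule this out by invoking faithfulness, under which such an exact cancellation is a non-generic algebraic coincidence of the same nature as an unfaithful vanishing of a conditional dependence; with it excluded, $\amie(X_i,Y)\neq 0$ follows. The exhaustiveness of the three cases for the $X_i$ named in the statement, together with the exclusion of standalone unobserved direct causes (which guarantees every relevant unobserved cause is reached by an observed activator or proxy), completes the argument.
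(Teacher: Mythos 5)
Your proposal follows essentially the same route as the paper's proof: for each case you exhibit the active path ($X_i\rightarrow Y$, the chain $X_i\rightarrow U_i\rightarrow Y$, or the fork $X_i\leftarrow U_i\rightarrow Y$), note that conditioning on $\mathbf{X'}$ cannot block it because $U_i$ is unobserved, invoke faithfulness to turn $d$-connection into conditional dependence, and conclude $\amie(X_i,Y)\neq 0$. The one place you go beyond the paper is in explicitly flagging that a nonzero MIE on a positive-probability set of $\mathbf{x'}$ could still average to zero and patching this with a genericity appeal to faithfulness --- a step the paper's own proof passes over silently --- so your version is, if anything, the more careful of the two.
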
    

The above theorem presents a necessary condition to link direct causes (or their activators and proxy proxies). The features with non-zero AMIEs may include others. We define `\emph{inducing paths}' and `\emph{relaxed inducing paths}' as follows to help identify the conditions when the false linkages occur.  

\begin{definition}[Inducing Paths and Relaxed Inducing Paths]
	\label{def:inducing}
	{\fontdimen2\font=0.2em The path between $X$ and $Y$ in a causal DAG is inducing if all other observed variables along the path are colliders which are also ancestors of $Y$. The path between $X$ and $Y$ is a relaxed inducing path if it is an inducing path except that the variables next to $Y$ (i.e., the ones that share an unobserved ancestor with $Y$) are not an ancestor of $Y$.}
\end{definition}
	
	
When an inducing or relaxed inducing path exists between $X$ and $Y$, a spurious association is often formed between $X$ and $Y$~\cite{richardson2002ancestral,zhang2008completeness}. These spurious correlations lead to false identification of direct cause of $Y$, activates or proxies of unobserved direct causes of $Y$ from the data. Therefore, we propose the following theorem addressing the issue of false identifications.

\begin{theorem}[Cases for false linkages of AMIEs with causality]
	\label{theorem_false01}
Given that Assumptions~\ref{assump-Data-graph-faithfulness} and~\ref{assump-Data-model-faithfulness} hold. There are no standalone unobserved direct causes of $Y$ and $\mathbf{X}$ does not include any descendants of $Y$. If $\amie(X_j, Y) \ne 0$ but $X_j$ is not a direct cause of $Y$, or an activator or proxy of an unobserved direct cause of $Y$, one of the following cases will be true: Case 1 $X_j$ is a parent of the proxy of the unobserved direct cause of $Y$; Case 2 $X_j$ shares a common unobserved ancestor with the proxy of the unobserved direct cause of $Y$; or Case 3 $X_j$ and $Y$ form an inducing path or a relaxed inducing path. 
\end{theorem}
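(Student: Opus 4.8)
The plan is to translate the probabilistic hypothesis $\amie(X_j,Y)\neq 0$ into a purely graphical d-connection statement, and then read off the three cases from the structure of the connecting path. First I would note that, writing $\mathbf{X}'=\mathbf{X}\setminus\{X_j\}$, if $X_j\CI Y\mid \mathbf{X}'$ held then $P(y\mid X_j{=}1,\mathbf{X}'{=}\mathbf{x}')=P(y\mid X_j{=}0,\mathbf{X}'{=}\mathbf{x}')$ for every $\mathbf{x}'$, so every term $\mie(X_j,Y\mid\mathbf{X}'{=}\mathbf{x}')$ would vanish and hence $\amie(X_j,Y)=0$. Contrapositively, $\amie(X_j,Y)\neq 0$ forces $X_j\nCI Y\mid\mathbf{X}'$ in the data; Assumption~\ref{assump-Data-model-faithfulness} transfers this dependence from the model to the data distribution, and Assumption~\ref{assump-Data-graph-faithfulness} (faithfulness) turns it into a d-connection: there is an active path $p$ between $X_j$ and $Y$ given $\mathbf{X}'$ in $\mathcal{G}$.

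Next I would record the constraints that the maximal conditioning set $\mathbf{X}'$ imposes on $p$. Because every observed variable except $X_j$ (and $Y$) lies in $\mathbf{X}'$: (i) every observed non-endpoint on $p$ must be a collider, since an observed non-collider would lie in $\mathbf{X}'$ and block $p$, and every such observed collider is automatically activated because it already lies in $\mathbf{X}'$; (ii) every latent collider on $p$ must have a descendant in $\mathbf{X}'$; and (iii) every latent non-collider is harmless. Using that $\mathbf{X}$ contains no descendant of $Y$ and that $X_j$ is not a direct cause of $Y$, the edge of $p$ incident to $Y$ cannot be $X_j\to Y$ nor point from $Y$ into an observed node, so the neighbour $W$ of $Y$ on $p$ is \emph{unobserved}.

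Then I would split on the orientation of the edge between $W$ and $Y$. If $W\to Y$, then $W$ is an unobserved direct cause $U_i$ of $Y$; since there are no standalone latent direct causes, $U_i$ has an observed activator or proxy. Tracing $p$ from $Y$ toward $X_j$ and letting $X_k$ be the first observed node met, constraint (i) makes $X_k$ a collider, and I would argue that the only way to reach an observed collider from $U_i$ while keeping all intermediate nodes latent non-colliders is through the fork $X_k\leftarrow U_i\to Y$, i.e. $X_k$ is a \emph{proxy} of $U_i$ (the activator route $\cdot\to U_i\to Y$ is excluded because its observed parent would be a blocking non-collider, and $X_j$ itself is assumed to be neither activator nor proxy). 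It remains to follow the incoming edge at the collider $X_k$ back toward $X_j$: if it is the single edge $X_j\to X_k$ we obtain Case~1; otherwise the route from $X_j$ into $X_k$ can use only latent non-colliders, forcing a latent fork and hence a common unobserved ancestor of $X_j$ and the proxy $X_k$, which is Case~2. If instead $Y\to W$ (or more generally $p$ reaches $Y$ only through latent colliders), then every observed node on $p$ is a collider lying in $\mathbf{X}$ and therefore not a descendant of $Y$; combining this with the activation requirement (ii) and the no-descendant-of-$Y$ assumption shows these observed colliders are ancestors of $Y$, which is exactly an \emph{inducing path} in the sense of Definition~\ref{def:inducing}, while the variant in which the latent neighbour of $Y$ is a shared latent ancestor of $Y$ gives the \emph{relaxed inducing path}; either way we are in Case~3. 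Enumerating these exhausts the possibilities, so at least one of Cases~1--3 holds.

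The main obstacle I anticipate is the bookkeeping in the $W\to Y$ branch: making rigorous the claim that, when $p$ passes through several latent nodes, the first observed node encountered from $Y$ must be the proxy collider $X_k$ with $U_i\to X_k$, and that the remaining segment from $X_j$ to $X_k$ collapses to either a direct edge (Case~1) or a single latent fork (Case~2). This needs a careful induction along $p$ that simultaneously rules out longer latent chains below $Y$ (using that latent colliders demand observed, hence non-$Y$-descendant, descendants) and multiple observed colliders. A secondary difficulty is aligning the $Y\to W$ / latent-common-cause analysis exactly with the paper's specific inducing versus relaxed-inducing definitions, in particular the clause distinguishing nodes that share an unobserved ancestor with $Y$ from genuine ancestors of $Y$.
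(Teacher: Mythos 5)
Your overall strategy---converting $\amie(X_j,Y)\neq 0$ into a d-connection between $X_j$ and $Y$ given $\mathbf{X}'=\mathbf{X}\setminus\{X_j\}$ and then classifying the structure of the active path---is the natural way to prove the theorem \emph{as stated}, and it is worth noting that it is not what the paper does. The paper's own proof runs in the opposite direction: it verifies only that each of Cases 1--3 yields a path that is d-connecting given $\mathbf{X}'$ and hence a non-zero AMIE, i.e., that the three cases are \emph{possible} sources of false linkage, not that they are \emph{exhaustive}, which is what the theorem literally asserts (``one of the following cases will be true''). Your opening reductions are all sound: the dependence transfers from model to data to graph via Assumptions~\ref{assump-Data-model-faithfulness} and~\ref{assump-Data-graph-faithfulness}; every observed non-endpoint on the active path must be a collider (a conditioned non-collider would block, and a conditioned collider is automatically activated); and the neighbour of $Y$ on the path must be unobserved, since an observed parent of $Y$ on the path would be a conditioned non-collider and an observed child of $Y$ is excluded by hypothesis.

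The gap is exactly where you anticipated it, and it is not mere bookkeeping. In the $W\to Y$ branch, the claim that the segment from $X_j$ to the first observed node collapses to either a single edge (Case 1) or a single latent fork (Case 2) fails in general. Consider $X_j\to X_{k_1}\leftarrow U_1\to X_{k_2}\leftarrow U_i\to Y$ with $U_1,U_i$ unobserved: both $X_{k_1}$ and $X_{k_2}$ are colliders lying in the conditioning set, both latent nodes are unconditioned non-colliders, so the path is active and $\amie(X_j,Y)\neq 0$; yet $X_j$ is not a parent of the proxy $X_{k_2}$ (Case 1 fails), shares no common unobserved ancestor with it since $U_1$ is not an ancestor of $X_j$ (Case 2 fails), and the path qualifies as an inducing or relaxed inducing path under Definition~\ref{def:inducing} only if $X_{k_1}$ happens to be an ancestor of $Y$, which nothing forces. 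So either such configurations must be absorbed into Case 3 by reading the (relaxed) inducing-path definition more liberally than it is written, or the enumeration is incomplete; your induction along $p$ cannot close as sketched until this is resolved. The paper's proof does not encounter this obstacle only because it never attempts the exhaustiveness direction at all.
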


\begin{figure}[t]
	\centering
	\begin{subfigure}[c]{0.2\textwidth}
		\centering
		\includegraphics[scale=0.3]{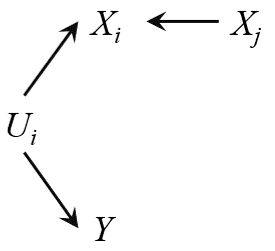}
		\caption{}
		\label{pic:examplefalse1}
	\end{subfigure}
	\begin{subfigure}[c]{0.2\textwidth}
		\centering
		\includegraphics[scale=0.3]{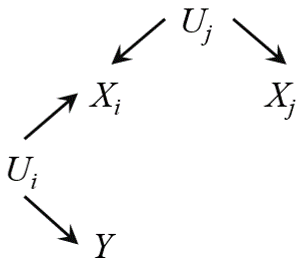}
		\caption{}
		\label{pic:examplefalse2}
	\end{subfigure}
	\begin{subfigure}[c]{0.25\textwidth}
		\centering
		\includegraphics[scale=0.3]{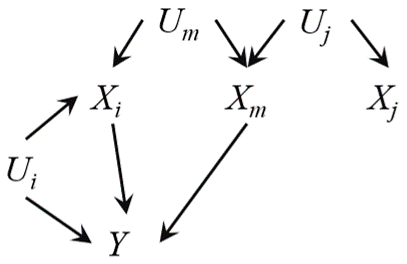}
		\caption{}
		\label{pic:examplefalse3}
	\end{subfigure}
	\begin{subfigure}[c]{0.25\textwidth}
		\centering
		\includegraphics[scale=0.3]{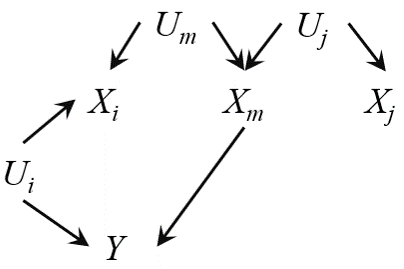}
		\caption{}
		\label{pic:examplefalse4}
	\end{subfigure}
	\caption{Four exemplar causal DAGs illustrate three false positive cases in Theorem~\ref{theorem_false01}. Specifically: (a) Case 1, $X_j$ is a parent of $X_i$, and $X_i$ is a proxy of the unobserved direct cause $U_i$; (b) Case 2, $X_j$ shares a common unobserved confounder $U_j$ with $X_i$ which is a proxy of the unobserved direct cause $U_i$; (c) Case 3, the inducing path between $X_j$ and $Y$, where $X_m$ is a collider and a cause of $Y$; (d) Case 3, the relaxed inducing path between $X_j$ and $Y$ where $X_m$ is a collider and a cause of $Y$.}
	\label{fig:examplefalse}
\end{figure}


Theorem~\ref{theorem_false01} indicates that when there are unobserved direct causes in the feature set  $\mathbf{X}$, if we use the necessary condition stated in Theorem~\ref{theorem_DirectCause-AMIEs01} to link non-zero AMIEs with direct causes, or activators or proxies of observed direct causes of $Y$, false positives may occur.
 We will now show how to exclude some false linkages.  
 
	
\begin{theorem}[A test for false linkages]
	\label{theo:dfidc01}
	Given that Assumptions~\ref{assump-Data-graph-faithfulness} and~\ref{assump-Data-model-faithfulness} hold. There are no standalone unobserved direct causes of $Y$ and $\mathbf{X}$ does not include any descendants of $Y$. False discovered $X_j$ in Cases (1) and (2) of Theorem~\ref{theorem_false01} can be detected since $X_j \CI Y$ holds.
\end{theorem}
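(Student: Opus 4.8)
The plan is to reduce the probabilistic claim $X_j \CI Y$ to a purely graphical one and then lift it back. By Assumption~\ref{assump-Data-graph-faithfulness} the data distribution is faithful to the causal DAG $\mathcal{G}$ augmented with the unobserved nodes (here $U_i$ and, for Case~2, $U_j$), so it suffices to show that $X_j$ and $Y$ are $d$-separated \emph{given the empty set}. Faithfulness then upgrades this marginal $d$-separation to the marginal independence $X_j \CI Y$, which is an independence between two observed variables and is therefore directly testable on the data (and, by Assumption~\ref{assump-Data-model-faithfulness}, consistent with the learned $P(Y\mid\mathbf{X})$). Thus the entire argument is about verifying that every path between $X_j$ and $Y$ is blocked when nothing is conditioned on.

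First I would treat Case~1, whose structure (Figure~\ref{pic:examplefalse1}) is $X_j \rightarrow X_i \leftarrow U_i \rightarrow Y$, where $X_i$ is the observed proxy of the unobserved direct cause $U_i$. The key observation is that $X_i$ is a collider on this path ($\rightarrow X_i \leftarrow$). Since the conditioning set is empty, neither $X_i$ nor any descendant of $X_i$ is conditioned on, so the collider blocks the path. This is exactly the contrast with the AMIE computation: $\amie(X_j,Y)$ conditions on $\mathbf{X'}$, which contains the proxy $X_i$, thereby opening the collider and manufacturing the spurious non-zero effect, whereas removing that conditioning closes it again.

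Next I would handle Case~2 (Figure~\ref{pic:examplefalse2}), where $X_j \leftarrow U_j \rightarrow X_i \leftarrow U_i \rightarrow Y$ and $U_j$ is the shared unobserved ancestor of $X_j$ and the proxy $X_i$. Here $X_i$ is again a collider ($U_j \rightarrow X_i \leftarrow U_i$), so the identical marginal-blocking argument applies and the only connecting route is cut under the empty conditioning set. In both cases I would then invoke faithfulness to conclude $X_j \CI Y$, so that an independence test between $X_j$ and $Y$ flags these false positives.

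The main obstacle is not the minimal diagrams but the \emph{generality} of Cases~1 and~2: I must argue that, under the premises of Theorem~\ref{theorem_false01} ($X_j$ is not a direct cause, activator, or proxy of an unobserved direct cause of $Y$, and $X_j$ does not form an inducing or relaxed inducing path with $Y$), the collider at the proxy $X_i$ really is the \emph{only} marginally relevant route from $X_j$ to $Y$. The clean principle I expect to use is that a proxy $X_i$ is a direct effect of $U_i$ and hence is \emph{not} an ancestor of $Y$; this simultaneously rules out a directed path $X_j \rightarrow \cdots \rightarrow Y$ through $X_i$ and keeps the collider blocked marginally, while a latent-confounded path bypassing $X_i$ would force $X_j$ to be a proxy or to lie on an inducing path, both excluded. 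This same principle explains why the test cannot reach Case~3: there the connecting collider is an ancestor of $Y$, so $X_j$ remains $d$-connected to $Y$ under the empty set, giving $X_j \nCI Y$ and leaving the false positive undetected by this check.
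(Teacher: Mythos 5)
Your proof takes essentially the same route as the paper's: in both Cases 1 and 2 the proxy $X_i$ is a collider on the connecting path between $X_j$ and $Y$, so the path is blocked under the empty conditioning set, and faithfulness converts this marginal $d$-separation into the testable independence $X_j \CI Y$ (while conditioning on $\mathbf{X'}\ni X_i$ is what opened the collider and produced the non-zero AMIE). Your write-up is in fact more careful than the paper's two-sentence argument, since you additionally justify why no other marginally open route from $X_j$ to $Y$ survives the premises of Theorem~\ref{theorem_false01} and why the same test cannot catch Case 3.
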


Based on Theorem~\ref{theo:dfidc01}, Cases (1) and (2) false positives can be removed by conducting independence tests. If a feature is independent of $Y$, it can be excluded from linking to a direct cause, or activator or proxy of a direct cause of $Y$. Thus, only Case 3 false positives may remain.  

\subsection{Summarising theoretical results and discussions}
\label{sec_summary}
\begin{wrapfigure}{r}{0.6\textwidth}
	\centering
	\begin{subfigure}[b]{0.15\textwidth}
		\centering
		\includegraphics[scale=0.3]{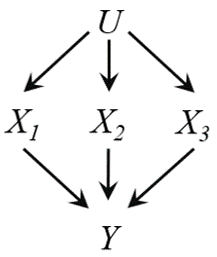}
		\caption{}
		\label{fig:discussions1}
	\end{subfigure}
	\begin{subfigure}[b]{0.15\textwidth}
		\centering
		\includegraphics[scale=0.3]{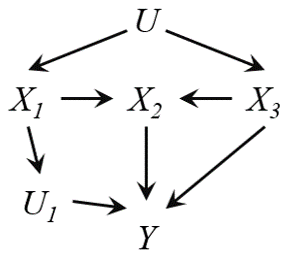}
		\caption{}
		\label{fig:discussions2}
	\end{subfigure}
	\begin{subfigure}[b]{0.15\textwidth}
		\centering
		\includegraphics[scale=0.3]{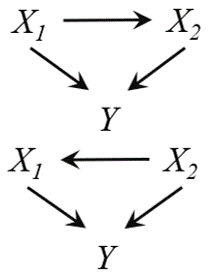}
		\caption{}
		\label{fig:discussions3}
	\end{subfigure}
	\begin{subfigure}[b]{0.1\textwidth}
		\centering
		\includegraphics[scale=0.3]{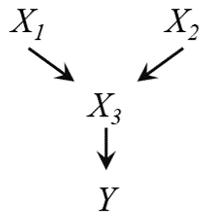}
		\caption{}
		\label{fig:discussions4}
	\end{subfigure}
	\caption{Four exemplar causal DAGs used in the discussions in Section~\ref{sec_summary}.}
	\label{fig:discussions}
\end{wrapfigure}
A summary of the conditions for using AMIE for a causal interpretation of a model is given in Table~\ref{tab-summary}. Whether there are unobserved direct causes of $Y$ or not in $\mathbf{X}$ is a major factor for causal interpretation by AMIE. In the ideal situation where there are no unobserved direct causes of $Y$, AMIE can be used for causal interpretation. When there are unobserved direct causes that have activators or proxies in $\mathbf{X}$, AMIE can still be used for causal interpretation, but with possible false positives. The false positives correspond to Case 3 in Theorem~\ref{theorem_false01}. We will show that false positives in Case 3 are rare in experiments. Therefore, AMIE still provides causal interpretation with a low risk of false positives. When there are unobserved standalone direct causes, AMIE is not applicable anymore. In this case, the model accuracy will be low since some direct causes of $Y$ are missing and their information is not in the model. It is not advisable to explain an inaccurate machine learning model with AMIE. 

\begin{wraptable}{r}{0.73\textwidth}
	\caption{A summary of the conditions for causal interpretation by AMIE.}
	\label{tab-summary}
	\centering
	\setlength\tabcolsep{3pt}
	{\scriptsize \begin{tabular}{|c|c|c|c|}
			\toprule
			\multirow{3}{*}{Assumptions~\ref{assump-Data-graph-faithfulness} and~\ref{assump-Data-model-faithfulness}} & \multirow{3}{*}{\begin{tabular}[c]{@{}c@{}}Without unobserved\\ direct causes\end{tabular}} & \multicolumn{2}{c|}{With unobserved direct causes} \\ \cmidrule{3-4} 
			& & \begin{tabular}[c]{@{}c@{}}without standalone\\ direct causes\end{tabular} & \begin{tabular}[c]{@{}c@{}}with standalone\\ direct causes\end{tabular} \\ \midrule No descendant variables of $Y$ & \textbf{Yes} & \textbf{Yes with false positives} & \textbf{No} \\ \bottomrule
	\end{tabular}}
\end{wraptable}
We now discuss how previous causality-based feature attribution methods fit in the conditions in Table~\ref{tab-summary} and their difference from the results in this paper. 

Methods~\cite{goyal2019explaining,Feder-CausaLM-2021,abraham2022cebab} fall in column two of Table~\ref{tab-summary}, i.e., without unobserved direct causes.

%
The causal graph used (or assumed) by the methods is shown in Figure~\ref{fig:discussions1} where $\mathbf{X} \backslash X_i$ is the confounder set when feature $X_i$ is considered as the treatment or being attributed. This is a special causal DAG and does not represent a general case. To generally discuss causality in explanation, $do$-calculus has been used to define a causal effect, mostly incorporated into the causal Shapley values~\cite{heskes2020causal,janzing2020feature}. To derive the probability expression of a $do$-expression, there needs a known causal graph or causal orders of all variable pairs. Such causal knowledge is not available in many applications, and this limits the application scope of the methods. These works do not consider unobserved direct causes and hence fall in column two of Table~\ref{tab-summary} too.  

Work~\cite{pmlr-v162-jung22a} estimating $do$-Shapley values belongs to the third column in Table~\ref{tab-summary}, i.e., with unobserved direct causes but without standalone direct causes. The causal graph representing the problem where $do$-Shapley values can be estimated is shown in Figure~\ref{fig:discussions2}. The causal orders among $X_1$, $X_2$ and $X_3$ are necessary to estimate $do$-Shapley values. The unobserved variables considered in work~\cite{pmlr-v162-jung22a} do not include the standalone unobserved direct causes. The work does not consider the proxy of an unobserved direct cause either since the effect a proxy variable on $Y$ is unidentifiable using their identification theorem. In our view, proxy variables of unobserved direct causes are important for the explanation.

The previously discussed causal interpretable methods either assume a simple causal graph as Figure~\ref{fig:discussions1} and no unobserved direct causes or need a causal graph or the causal orders of all variables. Either a causal graph or causal ordering knowledge is not available in many applications. It is impossible to learn a unique complete DAG from data~\cite{spirtes2000causation,maathuis2009estimating} either. In contrast, The AMIE-based explanation does not need a DAG or the total causal orders of $X$s (See discussions after Theorem~\ref{theorem_DirectCause-AMIEs01}. 
Using AMIE-based explanation is able to identify direct causes, activators or proxies of unobserved direct causes of $Y$ without a complete DAG but with the conditions summarised in Table~\ref{tab-summary}. 

We now differentiate AMIEs from Shapley values. Shapley values are mainly used for local or prediction level explanation~\cite{pmlr-v119-sundararajan20b}, which is different from model explanation focused in this work. A few works use Shapley values for model explanation, but explanation is not causal~\cite{Gromping-Importance-2007,Owen-OnShapley-2017,Song-Shapley-Global-16,Covert-SAGE-2000}. One major weakness of Shapley value is its time complexity for calculating. For example, computing the average Shapley value of a feature involves $n*2^{m-1}$ conditional probability estimations from the model where $n$ represents the number of instances and $m$ the number of features. In contrast, estimating AMIE of a feature only involves $n*m$ conditional conditional probabilities read from the model. Note that works~\cite{Song-Shapley-Global-16,Covert-SAGE-2000} improved efficiency greatly. 
%
A strength of Shapley values in model explanation is that it considers feature interactions. However, the interaction considered by Shapley values without a causal graph is problematic. Let us consider two causal DAGs in Figure~\ref{fig:discussions3}, $X_1$ and $X_2$ have an interaction. $X_1$ and $X_2$ will share a half of the contribution of interaction using the Shapley values discussed in~\cite{Gromping-Importance-2007,Owen-OnShapley-2017,Song-Shapley-Global-16}. Such an attribution is symmetric in both causal graphs but this should not be. In the upper graph $X_1$ should take the interaction effect while in the bottom graph $X_2$ should take the interaction effect. Interaction in a causal view should be considered with a causal graph. AMIE does consider interactions since it does not assume the causal knowledge among features which is unavailable in many applications.

AMIEs represent the average controlled direct effects and the AMIE of an indirect cause is zero. This makes sense because the set of direct causes completely explains $Y$. Given the set of direct causes, all other non-descendant variables of $Y$ are independent of the $Y$, i.e., they do not contribute to predicting $Y$. Moreover, if an explanation method aims to explain indirect causes, path-specific effects are necessary to achieve a fair feature attribution. For example, in Figure~\ref{fig:discussions4}, when the effects of $X_1$ and $X_2$ on $Y$ are estimated, the effect of $X_3$ should not be double counted. To achieve feature attribution without multiple countering some direct effects, a complete DAG is essential. However, a DAG is unavailable in most applications.

\section{Experiments}	
\label{sec:exp}
In this section, we will first demonstrate the correctness of theorems. Then, we show the usefulness of AMIE using two semi-synthetic datasets. Note that comparison is not the aim of experiments since the main contribution of the paper is to identify conditions linking intervention intuition to causality. Most existing feature attribution methods use the intervention intuition already and they will reveal some direct causes too when the conditions are met. The validation of conditions identified by theorems is the major aim of experiments.  




\subsection{Linking AMIES with direct causes without unobserved variables}
\label{Unobserved}
\begin{wraptable}{r}{0.6\textwidth}
	\setlength\tabcolsep{3pt}
	\caption{The consistency ($\%$) between direct causes of $Y$ and features with non-zero AMIEs without unobserved variables.}
	\label{tab:Completeness1}
	\centering
	{\scriptsize\begin{tabular}{cccccccc}
			\toprule
			\multirow{2}{*}{} & \multicolumn{3}{c}{LR} & &\multicolumn{3}{c}{RF} \\ \cmidrule{2-4} \cmidrule{6-8}
			& $d$ = 2 & $d$ = 4 & $d$ = 6 & & $d$ = 2 & $d$ = 4 & $d$ = 6 \\ \midrule
			$n$ = 40 & 98.0 ± 6.0 & 92.9 ± 8.8 & 92.1 ± 8.9 & & 98.0 ± 6.0 & 97.0 ± 6.4 & 93.5 ± 9.0 \\
			$n$ = 60 & 98.0 ± 6.0 & 95.3 ± 7.5 & 93.0 ± 9.1 & & 99.0 ± 3.0 & 98.0 ± 6.0 & 94.0 ± 8.4 \\
			$n$ = 80 & 99.0 ± 3.0 & 95.9 ± 6.4 & 93.6 ± 7.9 & & 100.0 ± 0.0 & 96.9 ± 6.6 & 95.2 ± 7.7 \\ \bottomrule
	\end{tabular}}
\end{wraptable}

In this section, we design an experiment to demonstrate the correctness of Theorem~\ref{theorem_DirectCause-AMIEs01}. We first randomly generate DAGs by using some parameters, including the number of observed nodes ($m+1$ for short) and the ratio of edges and nodes ($d$ for short). In our experiment, $m+1$ is set to 40, 60, and 80, while $d$ is set to 2, 4, and 6. When generating DAGs, we set the last node as $Y$ and ensure no children nodes of $Y$. We can access the ground truth (i.e., the direct causes of the outcome) since the DAGs are available. We use the above DAGs to generate synthetic datasets with 10,000 instances each. We select two well-known classification methods, Logistic Regression (LR)~\cite{hosmer2013applied} and Random Forest (RF)~\cite{breiman2001random}, to train the prediction models.

For evaluating the consistency between direct causes and features of non-zero AMIEs, we use consistency $|\mathbf{S_1} \cup \mathbf{S_2}|/|\mathbf{S_1}|$ as a measure, where $\mathbf{S_1}$ is the set of true direct causes obtained by DAGs and $\mathbf{S_2}$ is the set of features with non-zero AMIEs. $|\mathbf{S_*}|$ indicates the cardinality of set $\mathbf{S_*}$. To mitigate the bias brought by the data generation process, we repeatedly generated 30 datasets. We report the average consistency over the 30 datasets with the standard deviation.

From Table~\ref{tab:Completeness1}, we observe that the consistencies between the set of direct causes and the set of features with non-zero AMIEs are close to 100\%. This demonstrates the correctness of Theorem~\ref{theorem_DirectCause-AMIEs01}.  

\subsection{{\small Linking AMIES with direct causes with unobserved variable but no standalone direct causes}}	
\label{non-Independent}
In this section, we conduct experiments to evaluate the correctness of Theorems~\ref{the:twotypes},~\ref{theorem_false01}, and~\ref{theo:dfidc01}. The experiment procedure is similarly to the previous section. In addition, the situations where DAGs have unobserved variables except unobsered standalone direct causes are allowed. The number of unobserved variables ($l$ for short) is set by 2, 4, and 6, and they are randomly selected from all nodes except the outcome. For each setting (the number of observed nodes, the ratio of edges and nodes, and the number of unobserved nodes), we randomly generate 100 DAGs for evaluation.  

\begin{wraptable}{r}{0.65\textwidth}
	\centering
	\caption{The consistency ($\%$) between direct causes (including activators and proxies of unobserved direct causes) of $Y$ and features with non-zero AMIEs with unobserved variables but no unobserved standalone direct causes.}
	\label{tab:Completeness3}
	\setlength\tabcolsep{3pt}
	{\scriptsize \begin{tabular}{ccccccccc}
			\toprule
			\multirow{2}{*}{} & & \multicolumn{3}{c}{LR} & &\multicolumn{3}{c}{RF} \\ \cmidrule{3-5} \cmidrule{7-9}
			&$l$ & $d$ = 2 & $d$ = 4 & $d$ = 6 & & $d$ = 2 & $d$ = 4 & $d$ = 6 \\ \midrule
			\multirow{3}{*}{$n$ = 40} & 2  & 97.0 ± 4.6  & 88.8 ± 9.7  & 85.8 ± 10.1 && 99.0 ± 3.0  & 91.0 ± 9.4  & 89.0 ± 9.4  \\
			& 4 & 95.0 ± 6.7  & 86.6 ± 9.4  & 84.8 ± 9.1 && 97.0 ± 4.6  & 89.0 ± 9.4  & 88.0 ± 8.7  \\
			& 6 & 93.5 ± 7.1  & 84.3 ± 8.5  & 83.6 ± 9.1 && 96.5 ± 5.5  & 87.0 ± 9.0  & 87.0 ± 9.0  \\ \midrule
			\multirow{3}{*}{$n$ = 60} & 2  & 95.0 ± 8.1  & 90.2 ± 9.0  & 89.4 ± 8.1  && 97.0 ± 6.4  & 92.3 ± 8.3  & 90.3 ± 9.1\\
			& 4 & 94.8 ± 8.0  & 88.5 ± 8.6  & 87.7 ± 7.4  && 96.0 ± 6.6  & 90.7 ± 8.3  & 88.7 ± 8.6\\
			& 6 & 93.0 ± 7.8  & 86.0 ± 8.5  & 85.4 ± 5.9  && 95.0 ± 6.7  & 88.7 ± 8.2  & 86.3 ± 7.8\\ \midrule
			\multirow{3}{*}{$n$ = 80} & 2  & 97.0 ± 6.4  & 90.0 ± 7.7  & 89.5 ± 6.5 && 99.0 ± 3.2  & 93.5 ± 8.2  & 90.5 ± 9.0 \\
			& 4 & 96.0 ± 6.6  & 88.0 ± 7.5  & 88.0 ± 5.6  && 98.0 ± 4.2  & 91.0 ± 9.7  & 89.0 ± 8.4\\
			& 6 & 94.0 ± 8.0  & 86.0 ± 6.6  & 86.0 ± 4.4  && 95.0 ± 7.1  & 88.5 ± 10.3  & 88.0 ± 7.5\\ \bottomrule
	\end{tabular}}
\end{wraptable}

The consistencies between direct causes (including activators and proxies of unobserved direct causes) and features with non-zero AMIEs are reported in Table~\ref{tab:Completeness3}. With the increase in the number of unobserved variables, the consistency decreases, but still at a high level (above 85\%). Compared with the results in Table~\ref{tab:Completeness1}, the difference indicates the false discoveries due to Case 3 in Theorems~\ref{theorem_false01}. 

The decrease of inconsistencies due to Case 3 in Theorems~\ref{theorem_false01} is small because the occurrence of Case 3 is infrequent. We design an experiment to count the cases of (relaxed) inducing paths in random DAGs with random unobserved variables. The results are reported in Appendix~\ref{APPCompleteness}, and they show that Case 3 is not common in the random DAGs with random unobserved variables.

\subsection{The risk of using AMIEs with unobserved standalone direct causes}
\label{subsec:three}
\begin{wraptable}{r}{0.58\textwidth}
	\caption{Accuracy of the model built from data with unobserved standalone direct causes and consistency between the set of direct causes and features with non-zero AMIEs.}
	\label{tab:Completeness4}
	\centering
	{\scriptsize \begin{tabular}{cccccc}
			\toprule
			&  &  \multicolumn{2}{c}{LR} & \multicolumn{2}{c}{RF} \\ \midrule
			$n$ & $l$    &  Model Acc & Rec Rate &  Model Acc & Rec Rate \\ \midrule
			\multirow{3}{*}{40} & 2  & 78.9 ± 14.7  & 82.9 ± 20.9  & 80.5 ± 16.6  & 85.7 ± 22.0  \\
			& 4 & 70.0 ± 10.8  & 77.7 ± 21.5  & 71.0 ± 13.8  & 79.0 ± 21.2  \\
			& 6 & 65.0 ± 7.4  & 70.4 ± 28.4  & 67.9 ± 10.6  & 73.9 ± 29.6  \\ \midrule
			\multirow{3}{*}{60} & 2  & 76.8 ± 11.2  & 81.2 ± 20.2  & 80.7 ± 11.3  & 84.0 ± 21.5  \\
			& 4 & 71.9 ± 10.6  & 74.3 ± 20.6  & 73.1 ± 13.4  & 75.0 ± 20.6  \\
			& 6 & 64.1 ± 7.0  & 68.8 ± 29.1   & 68.4 ± 10.0  & 71.4 ± 28.3  \\ \midrule
			\multirow{3}{*}{80} & 2  & 78.0 ± 8.1  & 80.0 ± 20.4  & 81.4 ± 8.9  & 85.7 ± 12.1  \\
			& 4 & 70.9 ± 9.2  & 75.7 ± 20.2  & 72.4 ± 14.2  & 76.0 ± 19.6  \\
			& 6 & 63.3 ± 6.9  & 70.0 ± 28.4   & 66.2 ± 11.1  & 71.4 ± 30.5  \\ \bottomrule
	\end{tabular}}
\end{wraptable}

In this section, we demonstrate the risk of using AMIEs when the main condition without unobserved standalone direct causes is violated. We fix the ratio of edges and nodes, i.e., $d$ = 4. Different from previous settings, the unobserved variables are standalone, i.e., $U \to Y$.  From Table~\ref{tab:Completeness4}, we observe that consistency between the direct causes and the features with non-zero AMIEs decreases with the increase of the number of the standalone unobserved variables, and the model accuracy also has the same trend. Compared with the results shown in Section~\ref{Unobserved} and Section~\ref{non-Independent}, as the number of unobserved standalone direct causes increases, causal interpretation with AMIE is highly uncertain.

\subsection{A demonstration of usefulness of AMIEs in semi-synthetic datasets}	
\label{Evaluation on Semi-Synthetic}
In this section, we aim to show the usefulness of AMIE. The reason to use semi-synthetic datasets is that such datasets based on Bayesian networks have ground truth direct causes. We first split training and test data with 70\%/30\% proportions, and select two well-known classification methods, Logistic Regression (LR) and Random Forest (RF), to train the prediction models. Secondly, we apply the above prediction models to test datasets and calculate AMIEs. Finally, we rank the features by using the value of AMIEs to test the consistency with the ground truth direct causes in comparison with feature importance provided by the models themselves, logistic regression coefficients and permutation feature importance provided by RF function~\cite{breiman2001random}. We also list results by Shapley effect~\cite{Song-Shapley-Global-16} given Shapley values are so popular in XAI.

\paragraph{Insurance.} The Bayesian Network Insurance~\cite{bnlearn} contains 26 nodes and 50 edges. All the variables are discrete, we select ThisCarCost as the outcome, and the direct causes of the outcome are ThisCarDam, CarValue, and Theft. We apply the one-hot encoder on all features, and then we get the dataset with 80 features. The Bayesian network for Insurance is shown in Appendix~\ref{APPinsurance}.

\begin{table*}[t]
\setlength\tabcolsep{2pt}
    \caption{Experiment results on Insurance. The upper table shows the results of LR, while the lower table shows the results of RF. The top ten important features by AMIE, model importance ranking and Shapley effect are listed. The true direct causes are checked.}
    \label{tab:Insurance}
    \centering
    {\scriptsize \begin{tabular}{cccccccccc}
    \toprule
    \multicolumn{10}{c}{LR (model accuracy 98.6\%)}                                                                                      \\ \midrule
                         & Ranked by AIME       & AMIE        &           & Ranked by Imp            & Coefficient  & & Ranked by Shapley Effect            & Shapley Effect  &      \\ \midrule
    1                    & ThisCarDam\_Severe   & 0.1749 &\checkmark       & ThisCarDam\_Severe       & 3.9611    &\checkmark & ThisCarDam\_Severe
      & 0.1485    &\checkmark                      \\
    2                    & ThisCarDam\_None     & 0.1484 &\checkmark       & Theft\_True              & 3.0283    &\checkmark & ThisCarDam\_Mild
       & 0.0792    &\checkmark                       \\
    3                    & ThisCarDam\_Mild     & 0.0917 &\checkmark       & ThisCarDam\_Moderate     & 2.3946    &\checkmark & CarValue\_FiveThou
       & 0.0437    &\checkmark                      \\
    4                    & Theft\_False         & 0.0652 &\checkmark       & Accident\_Moderate       &  0.9411   & & DrivQuality\_Excellent
       & 0.0272    &                      \\
    5                    & Theft\_True          & 0.0652 &\checkmark       & Accident\_Severe         & 0.8305    & & ThisCarDam\_Moderate
       & 0.0227    &\checkmark                      \\
    6                    & ThisCarDam\_Moderate & 0.0650 &\checkmark       & RiskAversion\_Psychopath &  0.6208   & & HomeBase\_Suburb
       & 0.0215    &              \\
    7                    & Accident\_None       & 0.0633 &                 & ILiCost\_HundredThou     &  0.6057   & & Accident\_Severe
       & 0.0199    &                    \\
    8                    & Accident\_Moderate   & 0.0176 &                 & Accident\_Mild           &  0.5107   & & DrivQuality\_Poor
       & 0.0161    &                    \\
    9                    & Accident\_Severe     & 0.0152 &                 & MedCost\_TenThou         &  0.4463   & & Mileage\_Domino
       & 0.0140    &                   \\
    10                   & ILiCost\_Million     & 0.0115 &                 & CarValue\_FiftyThou      &  0.4201   &\checkmark & Accident\_Mild
       & 0.0138    &     \\ \bottomrule
    \multicolumn{1}{l}{} &\multicolumn{1}{l}{}  & \multicolumn{1}{l}{} & \multicolumn{1}{l}{} & \multicolumn{1}{l}{}     & \multicolumn{1}{l}{}& \multicolumn{1}{l}{}& \multicolumn{1}{l}{}& \multicolumn{1}{l}{}& \multicolumn{1}{l}{} \\ \toprule
    \multicolumn{10}{c}{RF (Model Accuracy  98.6\%)}                                                                                      \\ \midrule
                         & Ranked by AIME        & AMIE    &       & Ranked by Imp         & Coefficient  & & Ranked by Shapley Effect            & Shapley Effect  &      \\ \midrule
    1                    & ThisCarDam\_Severe    & 0.2612  &\checkmark        & ThisCarDam\_Severe     &0.1828  &\checkmark & Accident\_None
     &0.0088  &                      \\
    2                    & ThisCarDam\_None      & 0.1966  &\checkmark        & ThisCarDam\_None       &0.1169       &\checkmark & Accident\_Mild
     &0.0063  &                      \\
    3                    & ThisCarDam\_Moderate  & 0.1875  &\checkmark        & Accident\_Severe     &0.0765 & & ThisCarDam\_Mild
     &0.0060  &\checkmark                      \\
    4                    & Accident\_Severe      & 0.1728  &                  & Accident\_None       & 0.0720  & & ThisCarDam\_Severe
     &0.0057  &\checkmark                     \\
    5                    & Accident\_Moderate    & 0.1584  &                  & OtherCarCost\_Thousand  & 0.0719        & & ThisCarDam\_None
     &0.0042  &\checkmark                     \\
    6                    & OtherCarCost\_Thousand& 0.1394  &                  & ThisCarDam\_Moderate   &0.0714 &\checkmark & ThisCarDam\_Moderate
     &0.0038  &\checkmark                     \\
    7                    & Theft\_True           & 0.1319  &\checkmark        & ThisCarDam\_Mild    &0.0573   &\checkmark & OtherCarCost\_HundredThou
     &0.0024  &                     \\
    8                    & MedCost\_Thousand     & 0.1263  &                  & Accident\_Moderate   & 0.0468      &  & OtherCarCost\_TenThou
     &0.0019  &                    \\
    9                    & Accident\_None        & 0.1118  &                  & OtherCarCost\_TenThou   & 0.0360      & & Accident\_Severe
     &0.0015  &                     \\
    10                   & Theft\_False          & 0.1096  &\checkmark        & MedCost\_Thousand      &   0.0277 & & OtherCarCost\_Thousand
     &0.0015  &                  \\ \bottomrule
    \end{tabular}}
\end{table*}

Table~\ref{tab:Insurance} reports the experimental results on the insurance dataset. We have the following observations.

\begin{enumerate}[leftmargin=0.5cm]
    \item The models' views on the same dataset are quite different. We see that the ranks of AMIE for LR and RF are different. Based on the AMIE ranking, the model of LR uses direct causes better than RF. This might mean that the LR model is more faithful to the underlying data-generating mechanism than RF on this dataset. Note that both models have the same accuracy. In real applications, if domain experts know the direct causes, AMIE will help domain experts pick a faithful model among several alternative models. 

    \item By comparing the number and ranks of true direct causes included in the top ten features ranked by AMIE, logistic regression coefficient/permutation feature importance~\cite{breiman2001random} and Shapley effect~\cite{Song-Shapley-Global-16}, the ranking by AMIE is closer to the ground truth. 
    \end{enumerate}

\paragraph{Water.} The conclusions drawn from the Water dataset are very similar to those from the Insurance dataset. However, fewer direct causes were included in the Water dataset than in the Insurance dataset. This might be due to the lower model accuracy on Water, which affects the linkage between AMIE and direct causes because of a lack of faithfulness between model and underlying causal mechanism. Detailed results of Water dataset are presented in Appendix~\ref{APPwater}. 


\section{Conclusion}
\label{sec:con}
In this work, we have studied a linkage between an intuitive intervention in model explanation, AMIE, and its causal interpretation, and identified conditions for such a linkage. When conditions are satisfied, features with non-zero AMIEs are direct causes, activators and proxies of direct causes and such a feature attribution enables users to choose a trustworthy model among a few alternative models for reliable deployment. The paper also studied the risks for such causal interpretation when unobserved direct causes are presented. Extensive experiments have demonstrated the soundness of the theorems and the potential of AMIE in real-world applications.

\bibliographystyle{plain}
\bibliography{amie.bib}

	
\appendix
\section{Background}\label{Appendix:A}
The following conditions/assumptions are commonly used in causal graphical modelling.	
\begin{assumption} [Markov Condition~\cite{spirtes2000causation}]
    \label{asm_Markovcondition}
    Given a DAG $\mathcal{G}=(\mathbf{V}, \mathbf{E})$ and $P(\mathbf{V})$, the joint probability distribution of $\mathbf{V}$, $\mathcal{G}$ satisfies the Markov condition if for $\forall X_i \in \mathbf{V}$, $X_i$ is independent of all non-descendants of $X_i$, given the set of parent nodes of $X_i$.
\end{assumption} 
	
With a DAG satisfying the Markov condition, the joint distribution of $\mathbf{V}$ can be factorised as $P (\mathbf{V}) = \prod_i P(X_i | Pa (X_i))$, where $Pa(X_i)$ denotes the set of direct causes of $X_i$.
	
With the Assumption~\ref{assump-Data-graph-faithfulness}, a causal DAG and data are linked. We can read the (in)dependencies between variables in $P(\mathbf{V})$ from a DAG $\mathcal{G}$ using $d$-separation~\cite{pearl2009causality}. 

In a DAG, d-separation is a well-known graphical criterion that is used to read off the identification of conditional independence between variables entailed in the DAG when the Markov property, faithfulness and causal sufficiency are satisfied~\cite{pearl2009causality,spirtes2000causation}.

\begin{definition}[d-separation~\cite{pearl2009causality}]
	\label{d-separation}
	A path $\pi$ in a DAG $\mathcal{G}=(\mathbf{V}, \mathbf{E})$ is said to be d-separated (or blocked) by a set of nodes $\mathbf{Z}$ if and only if
	(i) $\pi$ contains a chain $X_i \rightarrow X_k \rightarrow X_j$ or a fork $X_i \leftarrow X_k \rightarrow X_j$ such that the middle node $X_k$ is in $\mathbf{Z}$, or
	(ii) $\pi$ contains a collider $X_k$ such that $X_k$ is not in $\mathbf{Z}$ and no descendant of $X_k$ is in $\mathbf{Z}$.
	A set $\mathbf{Z}$ is said to d-separate $X_i$ from $X_j$ ($X_i \CI_{d} X_j|\mathbf{Z}$) if and only if $\mathbf{Z}$ blocks every path between $X_i$ to $X_j$. Otherwise they are said to be d-connected by $\mathbf{Z}$, denoted as $X_i\nCI_{d} X_j|\mathbf{Z}$.
\end{definition}

\begin{property}
	Two observed variables $X_i$ and $X_j$ are d-separated given a conditioning set $\mathbf{Z}$ in a DAG if and only if $X_i$ and $X_j$ are conditionally independent given $\mathbf{Z}$ in data~\cite{spirtes2000causation}. If $X_i$ and $X_j$ are d-connected, $X_i$ and $X_j$  are conditionally dependent. 
\end{property}
	
\begin{definition}[Causal Sufficiency~\cite{spirtes2000causation}]
    \label{def:causuf}
    A given dataset satisfies causal sufficiency if, for every pair of observed variables, all their common causes are observed.
\end{definition}
	
However, causal sufficiency is often violated in practice, as it is not feasible to operate in a closed world where all related variables are collected~\cite{spirtes2000causation,zhang2008completeness}. In this work, we employ a causal DAG denoted as $\mathcal{G}=(\mathbf{V}, \mathbf{E})$ with $\mathbf{V}=\{\mathbf{X}, \mathbf{U}\}$  to represent the causal relationships between variables, where $\mathbf{X}$ is the set of measured variables and $\mathbf{U}$ represents the set of unobserved variables.


\section{Proofs}
\label{sec:MIE_app}

\setcounter{theorem}{0}
\begin{theorem}[Linking AMIEs and direct causes of $Y$ without the presence of unobserved variables]
\label{theorem_DirectCause-AMIEs01_app}
	Suppose that Assumptions~\ref{assump-Data-graph-faithfulness} and~\ref{assump-Data-model-faithfulness} hold, and there are no unobserved variables. If $\mathbf{X}$ includes no descendant variables of $Y$, then $\forall X_i\in\mathbf{X}$, $\amie(X_i, Y) \ne 0$ if and only if $X_i$ is a direct cause of $Y$. Equivalently, $\amie(X_i, Y) = 0$ if and only if $X_i$ is not a direct cause of $Y$.
\end{theorem}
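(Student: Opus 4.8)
The plan is to work entirely at the level of the observational distribution. By Assumption~\ref{assump-Data-model-faithfulness} the quantities $P(y\mid X_i=1,\mathbf{X'=x'})$ and $P(y\mid X_i=0,\mathbf{X'=x'})$ read from the model coincide with the true data conditionals, so $\amie(X_i,Y)$ is a functional of $P(Y\mid\mathbf{X})$ alone; and by Assumption~\ref{assump-Data-graph-faithfulness} I can translate every conditional (in)dependence statement in the data into a $d$-separation statement in $\mathcal{G}$, and conversely. Because there are no unobserved variables, $\mathbf{X'}=\mathbf{X}\setminus\{X_i\}=\mathbf{V}\setminus\{X_i,Y\}$; that is, the conditioning set is every observed variable other than $X_i$ and $Y$, and in particular it contains all of $\pa(Y)$.

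For the direction ``$X_i$ is not a direct cause $\Rightarrow \amie(X_i,Y)=0$'' I would show $X_i\CI Y\mid\mathbf{X'}$, which forces every term $\mie(X_i,Y\mid\mathbf{X'=x'})$ to vanish and hence the average to vanish. The key structural fact is that $\mathbf{X}$ contains no descendant of $Y$ and there are no unobserved variables, so $Y$ has no children at all: every edge incident to $Y$ has the form $P\to Y$ with $P\in\pa(Y)\subseteq\mathbf{X'}$. Any path from $X_i$ to $Y$ must therefore end with a penultimate node $P\in\pa(Y)$ that is a non-collider on the path (either a chain $\cdots\to P\to Y$ or a fork $\cdots\leftarrow P\to Y$, since the edge $P\to Y$ points out of $P$); because $P$ lies in $\mathbf{X'}$, every such path is blocked. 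As $X_i$ is not adjacent to $Y$ (neither a parent, since it is not a direct cause, nor a child, since $Y$ has none), there is no direct edge to block, so $X_i\CI_{d} Y\mid\mathbf{X'}$ and faithfulness delivers the conditional independence.

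For the converse ``$X_i$ is a direct cause $\Rightarrow\amie(X_i,Y)\neq 0$'' I would first note that the edge $X_i\to Y$ is itself a path that no conditioning set can block, so $X_i\nCI_{d} Y\mid\mathbf{X'}$ and, by faithfulness, $X_i\nCI Y\mid\mathbf{X'}$. This guarantees $\mie(X_i,Y\mid\mathbf{X'=x'})\neq 0$ on a set of $\mathbf{x'}$ of positive probability. \textbf{The main obstacle is that} $\amie$ is a probability-weighted \emph{average} of these controlled differences, so in principle positive and negative contributions at different $\mathbf{x'}$ could cancel and leave $\amie=0$ despite a genuine conditional dependence. Ruling out this cancellation is precisely the role of the faithfulness assumption: I would argue that an exact cancellation of the averaged controlled differences is a nongeneric, measure-zero coincidence among the parameterisations compatible with $\mathcal{G}$, of the same algebraic nature as the path-cancellations that faithfulness already forbids, so under Assumption~\ref{assump-Data-graph-faithfulness} it does not occur and $\amie(X_i,Y)\neq 0$. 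The two directions are logically complementary, so the stated equivalence (and its zero-valued restatement) follows at once.
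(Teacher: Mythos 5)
Your proposal follows essentially the same route as the paper's own proof: reduce $\amie(X_i,Y)\neq 0$ to the conditional (in)dependence of $X_i$ and $Y$ given $\mathbf{X'}$, and then use faithfulness plus the absence of unobserved variables and of descendants of $Y$ to equate that dependence with the presence of the edge $X_i\to Y$. Your ``not a direct cause $\Rightarrow$ zero'' direction is in fact more complete than the paper's: the paper simply asserts that the absence of an edge forces $X_i\CI Y\mid\mathbf{X'}$, whereas you supply the $d$-separation argument (every path into $Y$ terminates in a parent of $Y$, which is a non-collider contained in $\mathbf{X'}$, so all paths are blocked). More importantly, the cancellation issue you flag in the converse direction is a genuine subtlety that the paper's proof silently steps over: the paper concludes $\amie(X_i,Y)\neq 0$ directly from the conditional dependence $P(Y\mid X_i=1,\mathbf{X'})\neq P(Y\mid X_i=0,\mathbf{X'})$, without noting that $\amie$ is an average of signed differences taken at different $\mathbf{x'}$, which could in principle cancel exactly. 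Your repair --- treating exact cancellation as a nongeneric, measure-zero coincidence of the same algebraic character as the path cancellations that faithfulness forbids --- is the standard one, but you should be aware that Assumption~\ref{assump-Data-graph-faithfulness} as literally stated only excludes conditional independencies not entailed by $\mathcal{G}$; it does not formally exclude a zero average of nonzero controlled differences, so strictly speaking you are invoking a mild strengthening of faithfulness rather than the assumption itself. With that caveat made explicit, your argument is correct and, if anything, tighter than the one in the paper.
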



\begin{proof}
	Given the assumptions, all and only the conditional independencies in the data are encoded in the DAG $\mathcal{G}$ and can be read from $\mathcal{G}$.  
	
	Firstly, we prove that if $\amie(X_i, Y) \ne 0$, then $X_i$ is a direct cause of $Y$. Based on Definition~\ref{def:amie}, $\amie(X_i, Y) \ne 0$ implies that there exists a causal effect of $X_i$ on $Y$ conditioning on all other variables. In other words, there must be an edge between $X_i$ and $Y$, otherwise conditioning on all other variables $\mathbf{X}'$, $X_i$ and $Y$ are independent, i.e., AMIE must be zero as when $X_i$ and $Y$ and independent conditioning on $\mathbf{X}'$, $P(Y| X_i=1, \mathbf{X}')=P(Y| X_i=0, \mathbf{X}')$, where $\mathbf{X}'\cup X =\mathbf{X}$. Thus, $X_i$ must be a direct cause of $Y$ since $Y$ has no descendants.
	
   Next, we show that if $X_i$ is a direct cause of $Y$, then $\amie(X_i, Y) \ne 0$. If $X_i$ is a direct cause of $Y$, then given all other variables, $X_i$ and $Y$ are still dependent, i.e., $P(Y| X_i=1, \mathbf{X}') \ne P(Y| X_i=0, \mathbf{X}')$, i.e., $\amie(X_i, Y) \ne 0$.

	Therefore, we have that $\amie(X_i, Y)$ is non-zero if and only if $X_i$ is a direct cause of $Y$. Equivalently, $X_i$ is not a direct cause if and only if $\amie(X_i, Y)$ is zero.
\end{proof}

\setcounter{theorem}{1}
\begin{theorem}[Linking AMIE with direct causes in the presence of unobserved variables]
		\label{the:twotypes_app}
		Suppose that Assumptions~\ref{assump-Data-graph-faithfulness} and~\ref{assump-Data-model-faithfulness} hold and there are no standalone unobserved direct causes of $Y$. If $\mathbf{X}$ does not include any descendants of $Y$, then $\amie(X_i, Y) \ne 0$ for a variable $X_i\in \mathbf{X}$ that is an observed direct cause, an activator or a proxy of an unobserved direct cause of $Y$.
\end{theorem}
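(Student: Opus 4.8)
The plan is to reduce the statement about $\amie$ to one about conditional (in)dependence and then verify that dependence by a $d$-connection analysis, exactly in the spirit of the proof of Theorem~\ref{theorem_DirectCause-AMIEs01}. Write $\mathbf{X}' = \mathbf{X}\setminus\{X_i\}$. If $X_i \CI Y \mid \mathbf{X}'$ held in the data, then $P(y\mid X_i=1,\mathbf{X}'=\mathbf{x}') = P(y\mid X_i=0,\mathbf{X}'=\mathbf{x}')$ for every $\mathbf{x}'$, so $\mie(X_i,Y\mid\mathbf{X}'=\mathbf{x}')=0$ pointwise and hence $\amie(X_i,Y)=0$. Contrapositively, it suffices to establish $X_i \nCI Y \mid \mathbf{X}'$. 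By the faithfulness of the data to $\mathcal{G}$ (Assumption~\ref{assump-Data-graph-faithfulness}), this is equivalent to exhibiting a path between $X_i$ and $Y$ that is $d$-connected given $\mathbf{X}'$ in the DAG over $\{\mathbf{X}\cup\mathbf{U}\cup Y\}$. The crucial structural observation, which I would isolate up front, is that $\mathbf{X}'$ contains only observed variables, so no unobserved node is ever in the conditioning set.

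Next I would dispatch the three cases by producing, in each, a single path whose only potential blocker is unobserved (or which has no intermediate node at all), so that conditioning on $\mathbf{X}'$ cannot block it and one open path suffices for $d$-connection. \textbf{(a)} If $X_i$ is an observed direct cause, the edge $X_i\rightarrow Y$ is a path with no intermediate node and can therefore never be blocked, so $X_i\nCI_{d} Y\mid\mathbf{X}'$. \textbf{(b)} If $X_i$ is an activator of an unobserved direct cause $U_i$ (Figure~\ref{pic:intro1}), the path $X_i\rightarrow U_i\rightarrow Y$ is a chain whose middle node $U_i$ is unobserved and hence absent from $\mathbf{X}'$; since a chain is blocked only by conditioning on its middle node, this path is open. \textbf{(c)} If $X_i$ is a proxy of an unobserved direct cause $U_i$ (Figure~\ref{pic:intro2}), the path $X_i\leftarrow U_i\rightarrow Y$ is a fork whose middle node $U_i$ is again unobserved, and a fork is blocked only by conditioning on its middle node, so this path is open as well. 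In every case $X_i$ and $Y$ are $d$-connected given $\mathbf{X}'$, hence conditionally dependent in the data by faithfulness, which yields some configuration $\mathbf{x}'$ with $\mie(X_i,Y\mid\mathbf{X}'=\mathbf{x}')\ne 0$.

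I expect the main obstacle to be the passage from this pointwise dependence to the non-vanishing of the population average. Conditional dependence guarantees $\mie(X_i,Y\mid\mathbf{X}'=\mathbf{x}')\ne 0$ for some $\mathbf{x}'$, but in principle positive and negative pointwise effects could cancel in $\amie(X_i,Y)=\mathbb{E}_{\mathbf{x}'}[\mie(X_i,Y\mid\mathbf{X}'=\mathbf{x}')]$ and produce zero. I would rule this out by the same faithfulness principle that is used implicitly in Theorem~\ref{theorem_DirectCause-AMIEs01}: an exact cancellation in the population average is a non-generic, measure-zero coincidence among the parameters of the data-generating distribution and is excluded whenever that distribution is faithful to $\mathcal{G}$. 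Finally, Assumption~\ref{assump-Data-model-faithfulness} guarantees that $f$ reproduces $P(Y\mid\mathbf{X})$ exactly, so the probabilities and expectations read off from the model coincide with those in the data; combining this with the non-cancellation argument delivers $\amie(X_i,Y)\ne 0$ in all three cases.
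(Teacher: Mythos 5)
Your proposal is correct and follows essentially the same route as the paper's proof: exhibit a $d$-connecting path ($X_i\rightarrow Y$, $X_i\rightarrow U_i\rightarrow Y$, or $X_i\leftarrow U_i\rightarrow Y$) whose only potential blocker is unobserved and hence never in the conditioning set $\mathbf{X}'$, invoke faithfulness to convert $d$-connection into conditional dependence in the data, and use Assumption~\ref{assump-Data-model-faithfulness} to transfer this to the model. You are in fact more careful than the paper on one point --- the paper silently passes from ``$X_i \nCI Y \mid \mathbf{X}'$'' to ``$\amie(X_i,Y)\ne 0$'' without addressing the possibility that pointwise $\mie$ values of opposite sign cancel in the expectation, whereas you flag this and dispose of it with a genericity/faithfulness argument.
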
 


\begin{proof}
 If $X_i$ is an activator of an unobserved direct cause of $Y$, then there exists a direct path $X_i\rightarrow U_i \rightarrow Y$ where $U_i$ is an unobserved direct cause of $Y$.  $X_i$ and $Y$ are not conditional independent given any other observed variables, i.e., $\mathbf{X}'$. This means $P(Y| X_i=1, \mathbf{X}')$ is not equal to $P(Y|X_i=0, \mathbf{X}')$, otherwise, $Y$ and $X_i$ are conditional independent. Therefore $\amie(X_i, Y) \ne 0$. 
 
 If $X_i$ is a proxy of the unobserved direct cause of $Y$, then there exists an unobserved confounder $U_i$ between $X_i$ and $Y$, i.e., $X_i\leftarrow U_i\rightarrow Y$. Based on the structure, $X_i$ and $Y$ are not independent given any other observed variables, then we have $P(Y| X_i=1, \mathbf{X}')$ is not equal to $P(Y|X_i=0, \mathbf{X}')$. Hence, we have $\amie(X_i, Y) \ne 0$.
\end{proof}

\setcounter{theorem}{2}
\begin{theorem}[Potential False Discoveries when Using AMIEs]
	\label{theorem_false01_app}
	Given that Assumptions~\ref{assump-Data-graph-faithfulness} and~\ref{assump-Data-model-faithfulness} hold. There are no standalone unobserved direct causes of $Y$ and $\mathbf{X}$ does not include any descendants of $Y$. If $\amie(X_j, Y) \ne 0$ but $X_j$ is not a direct cause of $Y$, or an activator or proxy of an unobserved direct cause of $Y$, one of the following cases will be true: Case 1 $X_j$ is a parent of the proxy of the unobserved direct cause of $Y$; Case 2 $X_j$ shares a common unobserved ancestor with the proxy of the unobserved direct cause of $Y$; or Case 3 $X_j$ and $Y$ form an inducing path or a relaxed inducing path. 
\end{theorem}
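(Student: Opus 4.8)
The plan is to convert the statement about $\amie$ into a statement about conditional (in)dependence and then into the language of d-connection, where the three cases emerge as an exhaustive classification of the admissible connecting paths. First I would establish, exactly as in the proof of Theorem~\ref{theorem_DirectCause-AMIEs01}, that under Assumptions~\ref{assump-Data-graph-faithfulness} and~\ref{assump-Data-model-faithfulness} we have $\amie(X_j,Y)\neq 0$ if and only if $X_j \nCI Y \mid \mathbf{X}\setminus\{X_j\}$: the averaged prediction difference is nonzero precisely when $P(Y\mid X_j{=}1,\mathbf{X}')\neq P(Y\mid X_j{=}0,\mathbf{X}')$, and faithfulness prevents the average over instances from cancelling a genuine dependence. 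By faithfulness and the d-separation property, this is equivalent to the existence of a path $p$ in $\mathcal{G}=(\mathbf{V},\mathbf{E})$, with $\mathbf{V}=\{\mathbf{X},\mathbf{U}\}$, that d-connects $X_j$ and $Y$ given $\mathbf{Z}:=\mathbf{X}\setminus\{X_j\}$.

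The decisive structural observation is that $\mathbf{Z}$ contains \emph{every} observed variable other than $X_j$ and $Y$. Hence on any d-connecting path $p$ each observed non-endpoint vertex must be a collider, since an observed non-collider would lie in the conditioning set $\mathbf{Z}$ and block $p$; moreover every unobserved collider on $p$ must have a descendant in $\mathbf{Z}$ to stay open, while unobserved non-colliders are automatically open. I would then analyse $p$ through the vertex $T$ adjacent to $Y$ and the orientation of the $T$--$Y$ edge. Because $\mathbf{X}$ contains no descendant of $Y$, the edge cannot be $Y\to T$ with $T$ observed; and because $X_j$ is, by hypothesis, neither a direct cause of $Y$ nor an activator nor a proxy of an unobserved direct cause, the clean connecting shapes $X_j\to Y$, $X_j\to U\to Y$ and $X_j\leftarrow U\to Y$ are all excluded, so whatever path survives must route through additional vertices.

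The core of the argument is the resulting case split. When the vertex of $p$ adjacent to $Y$ is an unobserved direct cause $U_i$ entering as a fork $\cdots\leftarrow U_i\to Y$, the ``no standalone unobserved direct cause'' assumption forces $U_i$ to have an observed neighbour, which appears on the path as an observed child $X_i$ of $U_i$, i.e.\ a proxy; since $X_i$ is an observed non-endpoint it is a collider, so its neighbour toward $X_j$ points into it, and tracing one further step (again every observed non-endpoint being a collider) forces the remainder to be either $X_j\to X_i$, so $X_j$ is a parent of the proxy (Case~1), or $X_j\leftarrow U_j\to X_i$, so $X_j$ and the proxy share an unobserved ancestor (Case~2). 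Every configuration that does \emph{not} place such a proxy next to $Y$ must reach $Y$ either through an observed cause of $Y$ acting as a collider or through an unobserved common ancestor of $Y$; combined with the fact that all observed intermediate vertices are colliders, this matches Definition~\ref{def:inducing}, giving an inducing path, or a relaxed inducing path when the vertex beside $Y$ merely shares an unobserved ancestor with $Y$ without itself being an ancestor of $Y$ (Case~3).

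The hard part will be making this final split genuinely exhaustive: I must verify that the collider/ancestor bookkeeping for the \emph{unobserved} vertices is consistent (each unobserved collider really has an observed descendant in $\mathbf{Z}$), that the ``no descendant of $Y$ in $\mathbf{X}$'' and ``no standalone unobserved direct cause'' assumptions jointly eliminate all connecting shapes except those listed, and that the behaviour at the $Y$-end is correctly sorted into the inducing versus relaxed-inducing definitions. I would control this by first reducing to a \emph{shortest} d-connecting path, so that no vertex repeats and each collider genuinely requires an open descendant, and then arguing over the finitely many orientations of the two edges nearest $Y$, since the negative hypothesis on $X_j$ constrains only the behaviour close to $Y$.
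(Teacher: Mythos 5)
Your proposal and the paper's proof go in opposite directions, and it is worth being clear about which one actually matches the statement. The theorem asserts exhaustiveness: every false positive must fall into one of Cases 1--3. You attack exactly that implication, by reducing $\amie(X_j,Y)\neq 0$ to $X_j \nCI Y \mid \mathbf{X}\setminus\{X_j\}$, then to d-connection given all other observed variables, and then classifying the open paths. The paper's proof instead verifies the \emph{converse}: it takes each of Cases 1--3 in turn and shows that the displayed structure leaves $X_j$ and $Y$ d-connected given $\mathbf{X}'$, hence $\amie(X_j,Y)\neq 0$. That establishes that each case can produce a false positive, but not that the three cases cover all false positives, which is what the ``one of the following cases will be true'' phrasing claims. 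So your route is the more faithful one to the literal statement, and your observation that conditioning on \emph{all} other observed variables forces every observed non-endpoint on an open path to be a collider is the right structural lever; the paper never uses it.

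The genuine gap is the one you flag yourself: making the case split exhaustive. As literally worded, Cases 1 and 2 describe $X_j$ being a parent of, or sharing an unobserved ancestor with, \emph{the} proxy, i.e.\ paths of the specific shapes $X_j\to X_i\leftarrow U_i\to Y$ and $X_j\leftarrow U_j\to X_i\leftarrow U_i\to Y$. But your own framework admits longer open paths, e.g.\ $X_j\to W\leftarrow V\to X_i\leftarrow U_i\to Y$ with $W$ an observed collider in the conditioning set and $V$ unobserved: this is d-connecting given $\mathbf{X}\setminus\{X_j\}$, yet $X_j$ is not a parent of the proxy, does not share an unobserved ancestor with it, and the path is not an inducing or relaxed inducing path under Definition~\ref{def:inducing} unless $W$ happens to be an ancestor of $Y$. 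Your plan of ``arguing over the finitely many orientations of the two edges nearest $Y$'' will not close this, because the obstruction lives in the middle of the path, not at the $Y$-end. To complete the argument you would either need to broaden Cases 1 and 2 to ``ancestor of the proxy via a chain of observed colliders / connected to the proxy through unobserved non-colliders and observed colliders,'' or show such longer configurations cannot occur, which the stated assumptions do not give you. Also note a small unaddressed point at the $Y$-end: ruling out $Y\to T$ for \emph{unobserved} $T$ requires the observation that any collider turning such a chain around would need an observed descendant of $Y$ in the conditioning set, contradicting the no-descendants assumption; your sketch only excludes the observed case.
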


\begin{proof}  
	If $\amie(X_j, Y) \ne 0$, then $P(Y| X_j=1, \mathbf{X}')$ is not equal to $P(Y|X_j=0, \mathbf{X}')$, i.e., $X_j$ and $Y$ are not independent given $\mathbf{X}'$. Thus, we prove that $X_j$ and $Y$ are not independent given $\mathbf{X}'$ in either of the following three cases:
	
	(1)  If $X_j$ is a parent of the proxy $X_i$ of the unobserved direct cause $U_i$ of $Y$, then the path $\pi$, $X_j\rightarrow X_i\leftarrow U_i \rightarrow Y$ in $\mathcal{G}$, i.e., $X_j \nCI_{d} Y| X_i$, and  $\amie(X_j, Y) = \mathbb{E}_{\mathbf{x}} (\mie(X_j, Y |  X_i, \mathbf{X''=x''}))$ where $\mathbf{X}''\setminus \{X_i, X_j\}$. We have $X_j$ and $Y$ are not independent given $X_i\in \mathbf{X}'$ since $X_i$ is a collider on $\pi$.
	
	(2) If $X_j$ shares a common unobserved ancestor $U_j$ with the proxy $X_i$ of the unobserved direct cause $U_i$ of $Y$, then the path $\pi$, $X_j\leftarrow U_j\rightarrow X_i\leftarrow U_i \rightarrow Y$ in $\mathcal{G}$, i.e., $X_j \nCI_{d} Y| X_i$, and  $\amie(X_j, Y) = \mathbb{E}_{\mathbf{x}} (\mie(X_j, Y |  X_i, \mathbf{X''=x''}))$ where $\mathbf{X}''\setminus \{X_i, X_j\}$. Similar to (1), $X_j$ and $Y$ are not independent given $X_i\in \mathbf{X}'$.
	
	(3)  If $X_j$ and $Y$ form an inducing path or a relaxed induing path $\pi$, then $X_j$ and $Y$ are dependent conditioning on a set of measured variables $\mathbf{Q}\subset \mathbf{X}'$, where $\mathbf{Q}$ are colliders and the causes of $Y$ based on Definition~\ref{def:inducing}. Thus, we have $X_j$ and $Y$ are not independent given $\mathbf{Q}$, i.e., $\amie(X_j, Y) = \mathbb{E}_{\mathbf{x}} (\mie(X_j, Y |  \mathbf{Q}, \mathbf{X''=x''}))\ne 0$ where $\mathbf{X}''\setminus \{\mathbf{Q}, X_j\}$. 
\end{proof}



\setcounter{theorem}{3}
\begin{theorem}[A test for false linkages]
	\label{theo:dfidc01_app}
	Given that Assumptions~\ref{assump-Data-graph-faithfulness} and~\ref{assump-Data-model-faithfulness} hold. There are no standalone unobserved direct causes of $Y$ and $\mathbf{X}$ does not include any descendants of $Y$. False discovered $X_j$ in Cases (1) and (2) of Theorem~\ref{theorem_false01} can be detected since $X_j \CI Y$ holds.
\end{theorem}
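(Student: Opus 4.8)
The plan is to exploit the collider structure that characterises Cases (1) and (2) of Theorem~\ref{theorem_false01}. In both cases the proxy variable $X_i$ sits as a \emph{collider} on the path linking $X_j$ to $Y$: in Case (1) the path is $X_j\rightarrow X_i\leftarrow U_i\rightarrow Y$, and in Case (2) it is $X_j\leftarrow U_j\rightarrow X_i\leftarrow U_i\rightarrow Y$. The non-zero AMIE arises precisely because the definition of $\mie$ conditions on $X_i\in\mathbf{X}'$, which opens this collider and d-connects $X_j$ and $Y$. The key idea is the contrast: the \emph{marginal} (unconditioned) relation between $X_j$ and $Y$ leaves the collider closed, so the path is blocked and $X_j$ and $Y$ become d-separated.

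First I would invoke the d-separation criterion (Definition~\ref{d-separation}): a collider $X_i$ blocks a path whenever neither $X_i$ nor any descendant of $X_i$ lies in the conditioning set. Since we are testing \emph{marginal} independence, the conditioning set is empty, so both the Case (1) path and the Case (2) path are blocked at $X_i$. The step I expect to be the main obstacle is then arguing that \emph{no other} path between $X_j$ and $Y$ is d-connecting under the empty conditioning set. This is where the false-discovery hypothesis does the real work: because $X_j$ is assumed to be neither a direct cause of $Y$, nor an activator, nor a proxy of an unobserved direct cause of $Y$, there is no edge $X_j\rightarrow Y$, no directed path $X_j\rightarrow\cdots\rightarrow U\rightarrow Y$ into an unobserved direct cause, and no fork $X_j\leftarrow U\rightarrow Y$ feeding $Y$ directly. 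Any remaining path from $X_j$ to $Y$ must therefore route through the collider $X_i$ (or through a descendant of $Y$, which is excluded since $\mathbf{X}$ contains no descendants of $Y$), and every such path is blocked marginally. I would close this with a short case analysis that rules out the alternative connecting path shapes, using the no-standalone-cause assumption to guarantee that every unobserved direct cause of $Y$ is reached only via an observed activator or proxy.

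Finally I would invoke faithfulness (Assumption~\ref{assump-Data-graph-faithfulness}) to translate marginal d-separation in $\mathcal{G}$ into marginal statistical independence in the data, concluding that $X_j\CI Y$. This yields exactly the detection test claimed: a false discovery falling in Case (1) or Case (2) is flagged by a marginal independence test between $X_j$ and $Y$. For completeness I would remark that a genuine direct cause ($X_i\rightarrow Y$), activator ($X_i\rightarrow U_i\rightarrow Y$), or proxy ($X_i\leftarrow U_i\rightarrow Y$) carries \emph{no} collider on its connecting path, hence remains marginally d-connected to $Y$ and satisfies $X_j\nCI Y$; this confirms that the test removes Case (1) and (2) false positives without discarding true linkages.
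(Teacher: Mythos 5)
Your proof takes essentially the same route as the paper's: in both Case (1) and Case (2) the proxy $X_i$ is a collider on the path between $X_j$ and $Y$, so under the empty conditioning set that path is blocked and faithfulness converts the marginal d-separation into $X_j \CI Y$. You are in fact more careful than the paper's two-line proof, which never addresses your ``main obstacle'' of ruling out other marginally open paths; just note that your claim that any remaining path must route through the collider $X_i$ would fail if $X_j$ also had an observed confounder or an observed mediator with $Y$ --- a configuration that the theorem statement (and the paper's own proof) implicitly excludes rather than argues away.
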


\begin{proof}
In Cases 1 and 2 of Theorem~\ref{theorem_false01}, on the path between $X_j$ and $Y$, $X_i$ acts as a collider and serves as a proxy of unobserved direct cause $U_i$ of $Y$. Consequently, $X_j\CI Y$ when $X_i$ is not conditioned upon. 
\end{proof}

\section{Experiments}
\label{sec:AppExperiments_app}

\subsection{Completeness of direct causes by AMIE with non-Independent Unobserved Variables}
\label{APPCompleteness}
We design an experiment to count the cases of (relaxed) reducing paths in 100 random DAGs with random non-independent unobserved variables. The frequency of the cases is low. The results are shown in Table~\ref{tab:Completeness444}

\begin{table}[h]
\caption{The number of cases of (relaxed) inducing paths in 100 random DAGs with random non-independent unobserved variables.}
\label{tab:Completeness444}
\centering
{\small \begin{tabular}{ccccccccccccccc}
    \toprule
    $n$                   & $l$ & $d$ = 2 & $d$ = 4 & $d$ = 6 &  & $n$                   & $d$ = 2 & $d$ = 4 & $d$ = 6 &  & $n$                   & $d$ = 2 & $d$ = 4 & $d$ = 6 \\ \cmidrule{1-5} \cmidrule{7-10} \cmidrule{12-15} 
    \multirow{3}{*}{40} & 2 & 6     & 4     & 7     &  & \multirow{3}{*}{60} & 2     & 6     & 3     &  & \multirow{3}{*}{80} & 2     & 2     & 0     \\
                    & 4 & 9     & 10    & 9     &  &                     & 5     & 8     & 5     &  &                     & 2     & 3     & 3     \\
                    & 6 & 16    & 14    & 16    &  &                     & 11    & 9     & 10    &  &                     & 8     & 4     & 8     \\ \bottomrule
\end{tabular}}
\end{table}

\subsection{Evaluation on Insurance Dataset}
\label{APPinsurance}
As shown in Figure~\ref{fig:insurance}, The Bayesian Network Insurance~\cite{bnlearn} contains 26 nodes and 50 edges. All the variables are discrete, we select ThisCarCost as the outcome, and the direct causes of the outcome are ThisCarDam, CarValue, and Theft. We apply the one-hot encoder on all features, and then we get the dataset with 80 features.

\begin{figure}[h]
	\centering
	\includegraphics[scale=0.5]{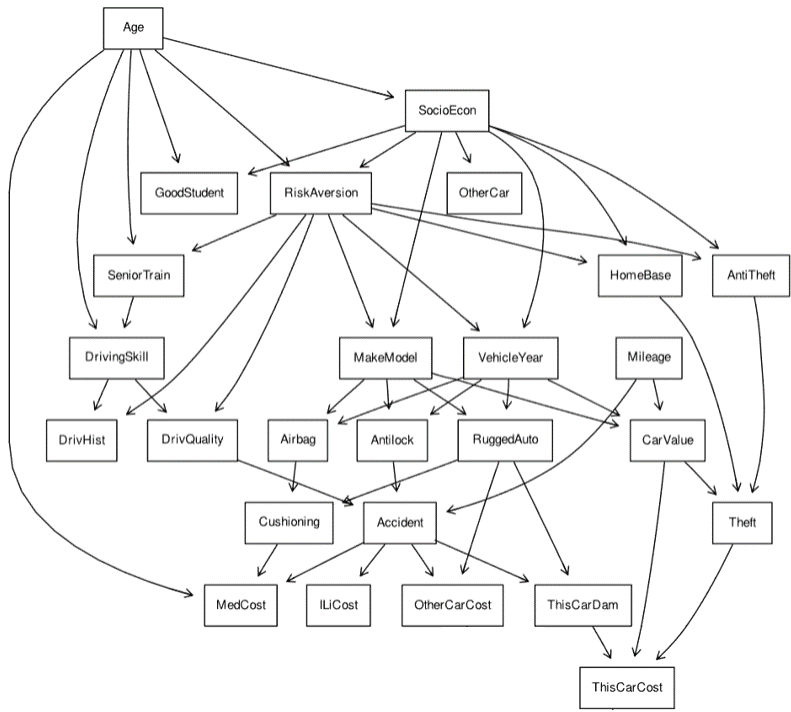}
	\caption{The Bayesian network for Insurance.}
	\label{fig:insurance}
\end{figure}

\begin{figure}[h]
	\centering
	\includegraphics[scale=0.45]{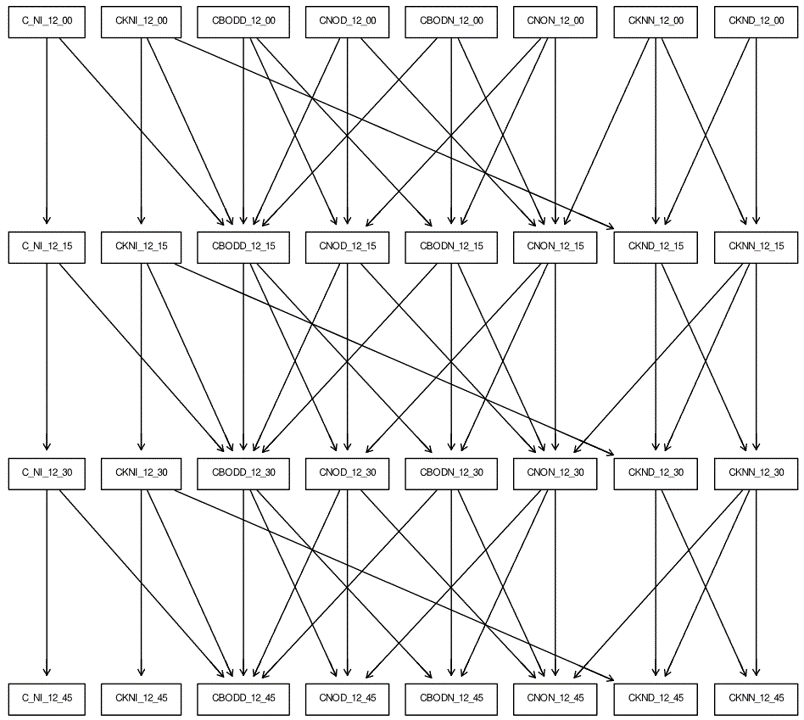}
	\caption{The Bayesian network for Water.}
	\label{fig:water}
\end{figure}

\subsection{Evaluation on Water Dataset}
\label{APPwater}

\paragraph{Water.} As shown in Figure~\ref{fig:water}, the Bayesian Network Water~\cite{bnlearn} contains 32 nodes and 66 edges. All the variables are discrete, we select CNOD\_12\_45 as the outcome, and the direct causes of the outcome are CBODD\_12\_30, CNOD\_12\_30, and CNON\_12\_30. We apply the one-hot encoder on all features, and then we get the dataset with 61 features. Table~\ref{tab:AppWater} reports the experimental results on the insurance dataset.

\begin{table}[h]
\setlength\tabcolsep{2pt}
\caption{Experiment results on Water. The upper table shows the results of LR, while the lower table shows the results of RF. The top ten important features by AMIE, model importance ranking and Shapley effect are listed. The true direct causes are checked.}
\label{tab:AppWater}
\centering
{\scriptsize \begin{tabular}{cccccccccc}
    \toprule
    \multicolumn{10}{c}{LR (model accuracy 82.3\%)}                                                                                      \\ \midrule
                         & Ranked by AIME       & AMIE        &           & Ranked by Imp            & Coefficient  &  & Ranked by Shapley Effect            & Shapley Effect  &     \\ \midrule
    1  & CNOD\_12\_30\_0\_5\_MG\_L & 0.2764 &\checkmark  & CNOD\_12\_30\_1\_MG\_L   & 3.0693 &\checkmark& CNOD\_12\_30\_0\_5\_MG\_L   & 0.2619 &\checkmark  \\
    2  & CNOD\_12\_30\_1\_MG\_L    & 0.2757 &\checkmark  & CBODD\_12\_15\_15\_MG\_L & 1.0707 & & CNOD\_12\_30\_1\_MG\_L   & 0.2609 &\checkmark  \\
    3  & CBODD\_12\_15\_15\_MG\_L  & 0.0876 &  & CBODN\_12\_45\_15\_MG\_L & 0.5025 & & CNOD\_12\_15\_0\_5\_MG\_L   & 0.0568 &  \\
    4  & CBODD\_12\_15\_20\_MG\_L  & 0.0834 &  & CNOD\_12\_15\_1\_MG\_L   & 0.4552 & & CNOD\_12\_15\_1\_MG\_L   & 0.0554 & \\
    5  & CBODD\_12\_30\_25\_MG\_L  & 0.0765 &\checkmark  & CBODN\_12\_30\_10\_MG\_L & 0.3680 & & CKNI\_12\_30\_40\_MG\_L   & 0.0277 & \\
    6  & CNOD\_12\_15\_0\_5\_MG\_L & 0.0481 &  & CBODD\_12\_30\_30\_MG\_L & 0.3359 &\checkmark & CBODN\_12\_45\_15\_MG\_L   & 0.0209 & \\
    7  & CNOD\_12\_15\_1\_MG\_L    & 0.0469 &  & CNON\_12\_15\_6\_MG\_L   & 0.3098 & & CBODD\_12\_30\_25\_MG\_L   & 0.0135 &\checkmark \\
    8  & CBODN\_12\_45\_15\_MG\_L  & 0.0456 &  & CKNI\_12\_30\_40\_MG\_L  & 0.2967 & & CKNI\_12\_30\_30\_MG\_L   & 0.0113 & \\
    9  & CBODN\_12\_45\_5\_MG\_L   & 0.0403 &  & CBODD\_12\_30\_20\_MG\_L & 0.2597 & & CKNI\_12\_45\_40\_MG\_L   & 0.0087 & \\
    10 & CBODN\_12\_30\_15\_MG\_L  & 0.0388 &  & CKND\_12\_15\_6\_MG\_L   & 0.2137 & & CBODD\_12\_15\_20\_MG\_L   & 0.0083 &     \\ \bottomrule
    \multicolumn{1}{l}{} &\multicolumn{1}{l}{}  & \multicolumn{1}{l}{} & \multicolumn{1}{l}{} & \multicolumn{1}{l}{}     & \multicolumn{1}{l}{} &\multicolumn{1}{l}{}&\multicolumn{1}{l}{}&\multicolumn{1}{l}{}&\multicolumn{1}{l}{}\\ \toprule
    \multicolumn{10}{c}{RF (Model Accuracy  85.0\%)}                                                                                      \\ \midrule
                         & Ranked by AIME        & AMIE    &       & Ranked by Imp         & Coefficient  &  & Ranked by Shapley Effect            & Shapley Effect  &     \\ \midrule
    1  & CNOD\_12\_30\_0\_5\_MG\_L & 0.3006 &\checkmark  & CNOD\_12\_30\_0\_5\_MG\_L & 0.2920 &\checkmark & CNOD\_12\_15\_1\_MG\_L   & 0.0124 & \\
    2  & CNOD\_12\_30\_1\_MG\_L    & 0.2285 &\checkmark  & CNOD\_12\_30\_1\_MG\_L    & 0.2492 &\checkmark & CNOD\_12\_15\_0\_5\_MG\_L   & 0.0080 & \\
    3  & CNOD\_12\_15\_1\_MG\_L    & 0.0743 &  & CNOD\_12\_15\_1\_MG\_L    & 0.0513 & & CNOD\_12\_30\_0\_5\_MG\_L   & 0.0045 &\checkmark \\
    4  & CBODD\_12\_15\_15\_MG\_L  & 0.0652 &  & CNOD\_12\_15\_0\_5\_MG\_L & 0.0489 & & CNOD\_12\_30\_1\_MG\_L   & 0.0014 &\checkmark \\
    5  & CNOD\_12\_15\_0\_5\_MG\_L & 0.0632 &  & CKNI\_12\_45\_30\_MG\_L   & 0.0193 & & CKNN\_12\_30\_1\_MG\_L   & 0.0008 & \\
    6  & CKNI\_12\_15\_30\_MG\_L   & 0.0526 &  & CKNI\_12\_00\_30\_MG\_L   & 0.0182 & & CKNI\_12\_00\_40\_MG\_L   & 0.0007 & \\
    7  & CKNI\_12\_45\_40\_MG\_L   & 0.0503 &  & CKNI\_12\_00\_40\_MG\_L   & 0.0181 & & CKNI\_12\_15\_40\_MG\_L   & 0.0007 & \\
    8  & CKNI\_12\_45\_20\_MG\_L   & 0.0500 &  & CKNI\_12\_30\_30\_MG\_L   & 0.0179 & & CKNN\_12\_30\_0\_5\_MG\_L   & 0.0007 & \\
    9  & CKNI\_12\_00\_30\_MG\_L   & 0.0485 &  & CKNI\_12\_15\_30\_MG\_L   & 0.0161 & & CKNI\_12\_30\_20\_MG\_L   & 0.0006 & \\
    10 & CKNI\_12\_00\_40\_MG\_L   & 0.0479 &  & CKNI\_12\_45\_20\_MG\_L   & 0.0154 & & CBODD\_12\_15\_15\_MG\_L   & 0.0005 &                   \\ \bottomrule
    \end{tabular}}
\end{table}

\section{Related Work}
\label{sec:rel}
In this section, we provide a review of existing works on techniques that provide \emph{post-hoc} insights into black-box classification models and the works related to causality-based model explanation. 

\subsection{Non-Causal based Model Explanation}
Most non-causal-based model explanation methods are gradient-based~\cite{selvaraju2017grad,shrikumar2017learning}. These methods can be categorised into two types: local interpretation and global interpretation. Local interpretation techniques, such as LIME (Local Interpretable Model-agnostic Explanations)~\cite{ribeiro2016should} and SHAP (Shapley Additive exPlanations) values~\cite{lundberg2017unified}, focus on creating models that approximate the behaviour of complex models locally. While they provide valuable insights, they also introduce complexities like the need for model selection and the risk of model misspecification. The choice between simpler or more complex explanation models is non-trivial and significantly affects interpretations. These methods require a dataset for generating interpretations, where the availability of original training data or a representative test dataset can be a constraint~\cite{arrieta2020explainable}.

In contrast, global interpretation aims to elucidate a model's decision-making mechanism across the entire input space, aiming to explain the entire decision-making process~\cite{tan2018learning,ibrahim2019global,moraffah2020causal}. For example, some methods utilise the surrogate models for providing global explanations~\cite{lakkaraju2016interpretable,yang2018global}.  Puri et al.~\cite{puri2017magix} employ strategies like generating if-then rules to explain the mechanism of the model. Guo et al.~\cite{guo2018explaining} apply a sophisticated mixture model framework for approximating the underlying model, enabling the extraction of global insights critical for interpreting the target model's behaviour. Tan et al.~\cite{tan2018learning} utilise the technique of model distillation to acquire global additive explanations, which elucidate the connection between input features and the model's predictions. Feature importance estimation of a model is another direction in global interpretation. Permutation feature importance in random forest~\cite{breiman2001random} is a classic method. Shapley value-based feature importance estimator have been studied. Work \cite{Gromping-Importance-2007} discuss the linkage between Shapley values and the feature importance estimation in Linear regression via variance decomposition. Work~\cite{Owen-OnShapley-2017} shows a merit of Shapley value in feature importance estimation, i.e. maintaining the functional equivalence of features. Works~\cite{Song-Shapley-Global-16,Covert-SAGE-2000} improved efficiency for Shapley value-based feature importance indexes at model explanation.


\subsection{Causality-based Model Explanation}
The causal inference has been employed in model explanation to discover which feature (or concept) makes the most important contribution to a model's prediction~\cite{kim2018interpretability,yao2022concept,goyal2019explaining,chattopadhyay2019neural}. 
For example, Yao et al.~\cite{yao2022concept} proposed the Concept-level Model Interpretation framework (CMIC) to identify and rank concepts that contribute to machine learning model predictions by discovering their causal relationships. This aims to provide more comprehensive and understandable interpretations. Similarly, Matthew et al.\cite{moraffah2020causal} developed a method for generating causal post-hoc explanations for black-box classifiers. This approach leverages a learned low-dimensional data representation, a generative model, and information-theoretic causal influence measures, targeting both global and local explanations. It does this without necessitating labelled features or known causal structures and has been validated through controlled tests and practical image recognition tasks. 

Some efforts have been made to use some causal frameworks for XAI models and their prediction~\cite{moraffah2020causal}. The effort was made mostly in two directions: using the average treatment effect as a measure for feature attribution~\cite{goyal2019explaining,Feder-CausaLM-2021,abraham2022cebab} and using causal inference to enhance causal interpretability for Shapley values~\cite{pmlr-v162-jung22a,heskes2020causal,janzing2020feature}. Work in~\cite{generativeCausalExplanation2020} uses information flow in a causal DAG to capture the causal influence of a feature on the outcome. All the previously discussed causal interpretation methods work require a pre-specified causal structure, a causal graph, or the causal orders among variables. Works in~\cite{schwab2019granger,SchwabCXPlain2019} use the Granger causality principle~\cite{granger1969investigating} for feature attribution. Both methods build an explanation model for a machine learning model to estimate feature importance.  


Counterfactual explanations have emerged as a prominent method for interpretability in machine learning, aiding in the understanding of model decisions by hypothesizing changes to the input data and observing the corresponding changes in output~\cite{sokol2019counterfactual,akula2020cocox}. For instance, Sokol et al.~\cite{sokol2019counterfactual} provide illustrative examples of explanations, analyse their advantages and disadvantages, demonstrate their application in debugging the base model, and reveal the associated security and privacy concerns. A significant body of work in causal explanation focuses on counterfactual explanation~\cite{chou2022counterfactuals,guidotti2022counterfactual}. True causal counterfactual explanations need a causal graph and structural equations associated with it. Consequently, many counterfactual explanations are association-based. However, causal counterfactual explanations require robust foundational assumptions about the underlying causal mechanisms of the data~\cite{akula2020cocox}. This typically involves employing Structural Causal Models (SCMs)~\cite{pearl2009causal,pearl2018book}, comprising a causal graph and functions specifying the relationships between variables. Although this approach provides a more principled and potentially more accurate method of generating explanations, it introduces its own set of challenges.  Additionally, it is important to emphasise that counterfactual explanations fall under the category of local explanations, whereas our AMIE value is designed to provide global explanations, offering insights that apply across the entire dataset rather than to individual instances.

\end{document}